%% file: main.tex
\documentclass[11pt]{article}
\usepackage{microtype}  
\usepackage{fullpage}
\usepackage{url}    
\usepackage{booktabs}       
\usepackage{nicefrac}       
\usepackage{xcolor,colortbl} 
\usepackage{multirow}
\usepackage{tablefootnote}
\usepackage{subcaption}
\usepackage{makecell}
\usepackage{silence}
\usepackage{preamble}
\usepackage{mathnotations}

\usepackage{widebar}

\usepackage{arxiv}

\begin{document}
\title{\huge Towards Differentially Private Reinforcement Learning with General Function Approximation}

\date{}
\author{}

\maketitle

\vspace{-4em}

\begin{center}
    \large
    \setlength{\tabcolsep}{10pt}
    \begin{tabular}{ccc}
        Yi He$^\dagger$ &
        Xingyu Zhou$^\dagger$
    \end{tabular}
\end{center}

\vspace{1em}

\begingroup
\renewcommand\thefootnote{\fnsymbol{footnote}}
\footnotetext[2]{Wayne State University, Detroit, USA. Email: \texttt{\{yihe, xingyu.zhou\}@wayne.edu}}
\endgroup

\begin{abstract}
We present the first theoretical guarantees for differentially private online reinforcement learning (RL) with general function approximation, extending beyond prior work restricted to tabular and linear settings. Our approach combines a batched policy update scheme with the exponential mechanism, together with a novel regret analysis. We show that, even under general function approximation, the regret in the model-free setting under differential privacy matches the state of the art for the linear case, scaling as $\widetilde{O}(K^{3/5})$, where $K$ denotes the number of episodes. As an important by-product, we also establish the first regret bound for online RL with batch update that depends on the standard complexity measure of \emph{coverability}, complementing existing results based on a newly introduced Eluder-Condition class. In addition, we uncover fundamental gaps in recent results for private RL with linear function approximation, thereby clarifying its landscape.
\end{abstract}

\input{sections/sec-intro}

\input{sections/sec-pre}
\input{sections/sec-alg}
\input{sections/sec-conclusion}

\printbibliography
\newpage

\appendix
\renewcommand{\contentsname}{Contents of Appendix}
\addtocontents{toc}{\protect\setcounter{tocdepth}{2}}

{
  \tableofcontents
}

\input{sections/app-related}

\input{sections/app-infinite}
\input{sections/app-gaps}
\input{sections/app-experiments}

\input{sections/app-proof}

\end{document}

%% file: sections/sec-intro.tex
\section{Introduction}

Reinforcement learning (RL) provides a principled framework for learning sequential decision-making strategies in unknown environments through repeated interaction. By observing the consequences of its actions and adjusting its behavior accordingly, an RL agent can progressively improve its performance over time. This paradigm has become a key enabler of modern personalized systems, with successful applications ranging from online recommendation~\cite{li2010contextual} to individualized healthcare interventions~\cite{yu2021reinforcement}, social robotics~\cite{gordon2016affective}, and, more recently, large language models (LLMs) optimized via reinforcement learning from human feedback or verified reward~\cite{ouyang2022training,jaech2024openai}. In these applications, the agent engages with users over multiple rounds, choosing actions based on the current interaction context and receiving feedback---either as per-step process reward or per-trajectory outcome reward---which is then leveraged to refine the agent’s decision-making policy.

While highly effective, RL-based personalization inevitably raises privacy concerns, as the feedback collected during interaction often reflects sensitive user information. For example, in personalized medical decision-making, the state may encode private attributes such as a patient’s medical history or physiological measurements, while rewards reflect treatment effectiveness or adverse reactions. Similarly, in LLMs, user prompts, preference feedback, and interaction outcomes may reveal personal intentions, beliefs, or proprietary content. Although users may be willing to share such information to obtain better personalized services, they typically do not consent to having their private data inferred by third parties through the outputs or internal updates of the learning algorithm. Indeed, the existing non-private RL approaches have been shown to suffer from privacy leakages~\cite{pan2019you,feng2024exposing,liu2025grpo}. Thus, it is essential to develop RL methods that can learn from user interactions while providing formal guarantees on individual privacy.

Motivated by these concerns, a recent line of theoretical work has investigated how to incorporate differential privacy (DP)~\cite{dwork2006calibrating} into RL, with the goal of understanding the fundamental tradeoffs between privacy protection and sample efficiency (e.g.,~\cite{vietri2020private,chowdhury2022differentially,qiao2023near,zhou2022differentially,luyo2021differentially}). Since standard DP is incompatible with sublinear regret, this literature has largely focused on a relaxed notion named \emph{joint differential privacy} (JDP)~\cite{kearns2014mechanism}, under which meaningful guarantees become attainable. They have established sublinear regret bounds for various private RL algorithms, providing the first principled understanding of privacy–utility tradeoffs in RL. While pioneering, existing results are almost exclusively limited to tabular environments or linear function approximation. These structural assumptions are crucial in prior analyses, as they enable private learning through low-dimensional sufficient statistics, simple confidence sets, or carefully controlled policy switching. However, such assumptions substantially limit applicability in modern RL systems, where value functions or policies are typically represented using rich, non-linear function classes. Extending theoretical guarantees beyond the linear regime introduces significant new challenges, as many standard exploration mechanisms, such as confidence-set-based optimism, do not readily admit private implementations in the general function approximation case.

This gap naturally motivates our study of sample-efficient private RL with \emph{general function approximation}. In particular, we ask the following fundamental question:
\begin{center}
    \emph{Can we still enjoy sublinear regret for private RL with general function approximation?}
\end{center}
We answer this question in the affirmative by establishing the first sublinear regret bound for RL with general function approximation under JDP privacy constraint. Specifically, our main contributions can be summarized as follows.

\begin{enumerate}[leftmargin=*]
    \item We study online RL in episodic Markov decision processes (MDPs) with \emph{general function approximation} under formal privacy guarantees via joint differential privacy (JDP). We establish a regret bound of order $K^{3/5}$, where $K$ is the number of episodes, with additional dependence on the \emph{coverability} parameter that characterizes the exploration difficulty. This $K^{3/5}$-regret bound matches the state-of-the-art bound for the linear case in the model-free setting. Our results apply to both general stochastic MDPs (Theorem~\ref{thm:general}) and deterministic MDPs with outcome-based rewards (Theorem~\ref{thm:deter}), a modeling framework motivated by LLM applications, where a much weaker representation condition (i.e., realizability) is required.
    
    \item Our results are achieved via a carefully designed algorithmic framework built on several key ingredients (Algorithms~\ref{alg:general} and~\ref{alg:deterministic}). First, instead of the confidence-set-based constrained optimization commonly used for exploration in non-private RL, we adopt an unconstrained optimization objective that explicitly balances exploration and exploitation. This formulation enables the direct application of standard privacy mechanisms, such as the exponential mechanism, to privatize each policy update.  Second, to control the cumulative privacy loss incurred by repeated updates, we employ a fixed-size batching strategy with a balanced choice of batch size, updating the policy only at the beginning of each batch.
    
    \item In addition, our analysis yields several by-products of independent interest. First, even without privacy considerations, our regret analysis under fixed-size batching provides new guarantees for RL with adaptivity constraints, where the regret depends directly on \emph{coverability} (cf.~\eqref{eq:reg_decomp}), matching the standard exploration complexity measure and avoiding the need for newly introduced conditions in prior work. Second, our generic and modular regret analysis offers a simple recipe for designing private RL algorithms beyond the specific privacy mechanisms used in this paper. Finally, we clarify the state of the art for private RL with linear MDPs by identifying fundamental gaps in recent claims of improved regret bounds.
\end{enumerate}

\subsection{Related Work}
We mainly discuss the most relevant work here and relegate a detailed discussion to Appendix~\ref{app:related}. For private RL,~\citet{vietri2020private}, \citet{chowdhury2022differentially}, and \citet{qiao2023near} established increasingly sharp guarantees in tabular MDPs, while~\citet{zhou2022differentially} and \citet{luyo2021differentially} moved to linear function approximation in the model-based and model-free settings, respectively. In the non-private regime, RL with general function approximation has been studied through complexity measures such as coverability~\cite{xie2022rolecoverageonlinereinforcement}, as well as through unconstrained optimization frameworks~\cite{liu2023maximize,chen2025outcomebasedonlinereinforcementlearning} that are particularly relevant to our algorithmic design. A third relevant line studies adaptivity constraints for policy update~\cite{abbasi2011improved,xiong2023general}, including both data-independent batching and data-dependent rare policy switching, which will be useful for controlling the privacy composition in online RL.

%% file: sections/sec-pre.tex
\section{Preliminaries}

\subsection{RL with General Function Approximation}
\textbf{Markov decision processes (MDPs).} We study RL under a finite-horizon episodic Markov decision process specified by a tuple $ M=(\cS, \cA, H, P, R,
\rho)$, with state space $\cS$, action space $\cA$, horizon length $H$, transition kernel $P = \{P_h\}_{h=1}^H$, ${P}_h: \cS \times \cA \to \Delta(\cS)$, reward function $R = \{R_h\}_{h=1}^H$, $R_h:\cS \times \cA \to [0,1]$, and initial state distribution $\rho \in \Delta(\cS)$.  A (randomized) policy $\pi$ consists of $H$-step functions: $\pi = \{\pi_h: \cS \to \Delta(\cA)\}_{h=1}^H$. The policy induces a distribution over trajectories $(s_1, a_1, r_1),\ldots, (s_H, a_H, r_H)$ via the following process. The initial state $s_1$ is first drawn via $s_1 \sim \rho$, and then for each $h \in [H]$, $a_h \sim \pi(\cdot|s_h)$, $r_h = R_h(s_h, a_h)\footnote{For simplicity, we consider deterministic reward and our result can be easily extended to the random case.}$, $s_{h+1} \sim P_h(\cdot|s_h, a_h)$ and $s_{H+1}$ is the terminal state with zero reward. Accordingly, we let $\mathbb{E}_{\pi}[\cdot]$ and $\mathbb{P}_{\pi}[\cdot]$ denote expectation and probability under this process, respectively. 

The value function and $Q$-function for a policy $\pi$ at each step $h \in [H]$ are given by 
\begin{align*}
    V_{h}^{\pi}(s) &:= \mathbb{E}_\pi  \left[ \sum_{h'=h}^H r_{h'} \middle | s_h = s\right], \quad Q_{h}^{\pi}(s,a) := \mathbb{E}_\pi  \left[ \sum_{h'=h}^H r_{h'} \middle| s_h = s, a_h =a \right].
\end{align*}
The total expected reward under a policy $\pi$ is thus given by 
\begin{align*}
    J(\pi):= \mathbb{E}_{\pi}\left[\sum_{h=1}^H r_h\right] = \mathbb{E}_{s_1 \sim \rho} V_1^{\pi}(s_1).
\end{align*}
Let $\pi^{\star}$ denote an optimal policy (i.e., $\pi^{\star} \in \argmax_{\pi} J(\pi)$) and $V_h^{\star} := V_h^{\pi^{\star}}$, $Q_h^{\star}=Q_h^{\pi^{\star}}$ be the corresponding value function and $Q$-function, which satisfies the well-known Bellman equation for each $s \in \cS$, $a \in \cA$  and $h \in [H]$:
\begin{align*}
    V_h^{\star}(s) = \max_{a \in \cA} Q_h^{\star}(s,a), \quad Q_h^{\star}(s,a) = R_h(s,a) + \mathbb{E}_{s' \sim \mathbb{P}_h(\cdot|s,a)}V_{h+1}^{\star}(s').
\end{align*}
This motivates us to define the so-called Bellman operator $\cT_h$ as follows: for any $f: \cS \times \cA \to \Real$, $\cT_h f$ is defined as  
\begin{align*}
    [\cT_h f] (s,a) := R_h(s,a) +  \mathbb{E}_{s' \sim \mathbb{P}_h(\cdot|s,a)}\max_{a'\in \cA} f(s',a').
\end{align*}
Throughout the paper, as in previous papers (e.g.,~\cite{xie2022rolecoverageonlinereinforcement,chen2025outcomebasedonlinereinforcementlearning}), we assume that the rewards are normalized such that $\sum_{h=1}^H r_h \in [0,1]$.

\textbf{Online RL.} We study RL in the online setting where the learner repeatedly interacts with an unknown MDP $M^{\star}$ via taking a policy for each episode and observing the trajectory, to maximize the total reward. Formally, the learning protocol runs in $K$ episodes, where at each episode $k \in [K]$, the learner (i) selects a policy $\pi_k = \{\pi_{k,h}\}_{h=1}^H$ to run in unknown MDP $M^{\star}$ and (ii) observes the resultant trajectory $\{(s_h^{(k)}, a_{h}^{(k)}, r_{h}^{(k)})\}_{h=1}^H$. The performance of the learner is measured via the regret
\begin{align*}
    \mathsf{Reg}:= \sum_{k=1}^K J(\pi^{\star}) - J(\pi_k).
\end{align*}
\textbf{General function approximation.} To achieve sublinear regret guarantees without dependence on the size of $\cS$, we work with model-free value function approximation, where the learner has access to a  \emph{general value function class} $\cF = \cF_1 \times \cdots \times \cF_H$ that attempts to model the value functions under $M^{\star}$, with each $\cF_h \subseteq (\cS \times \cA \to [0,1])$. To derive non-trivial regret bounds for online RL, some structural conditions on the function class $\cF$ and its interplay with $M^{\star}$ are often required. The first set is the so-called \emph{representation conditions}, which require that $\cF$ can model some necessary value functions. Two common conditions in this set are \emph{realizability} and \emph{Bellman completeness}. 
\begin{assumption}[Bellman completeness and realizability] \label{ass:completeness}
    For all $h \in [H]$, we have $\cT_h f_{h+1} \in \cF_h$ for all $f_{h+1} \in \cF_{h+1}$. This directly implies realizability that $Q^{\star} \in \cF$.
\end{assumption}

In contrast to statistical learning, online RL requires another set of conditions, called \emph{exploration conditions}, which roughly limit the amount of exploration to learn the optimal policy. To this end, we adopt the notion of \emph{coverability}~\cite{xie2022rolecoverageonlinereinforcement}.
\begin{definition}[Coverability]
    The coverability constant $C_{\mathsf{cov}} >0$ for a given MDP $M$ and policy class $\Pi$ is 
\begin{align*}
    C_{\mathsf{cov}}(\Pi; M):= \min_{\mu_1, \cdots, \mu_H \in \Delta(\cS \times \cA)}
        \max_{h \in [H], \pi \in \Pi} \left\| \frac{d_h^\pi}{\mu_h} \right\|_\infty,
\end{align*}
 where $\left\| \frac{d_h^\pi}{\mu_h} \right\|_\infty := \max_{s \in \cS, a \in \cA} \frac{d_h^\pi(s,a)}{\mu_h(s,a)}$ and $d^{\pi}_h(s,a) := \mathbb{P}_{\pi}[s_h=s, a_h=a]$ is the occupancy measure of $\pi$ in $M$. For notation simplicity, we let $C_{\mathsf{cov}} := C_{\mathsf{cov}}(\Pi_{\cF}; M^{\star})$,
 where $\Pi_{\cF} = \{\pi_f \mid f \in \cF\}$ with $\pi_f$ being the greedy policy with respect to $f$ via $\pi_{f,h}(s) \in \argmax_{a\in \cA} f_h(s,a)$.
\end{definition}

With Bellman completeness,~\citet{xie2022rolecoverageonlinereinforcement} shows that low coverability is sufficient for sample-efficient online RL in the sense that sublinear regret, i.e., $\widetilde{O}(\sqrt{KC_{\mathsf{cov}}})$ is achieved. Our main goal is to establish the corresponding regret bound under formal privacy guarantees in RL, introduced below.

\subsection{Differential Privacy in RL}
To adapt standard DP~\cite{dwork2006calibrating} for RL, following prior work~\cite{vietri2020private,zhou2022differentiallyprivatereinforcementlearning}, the goal here is to roughly ensure that changing one trajectory in the learning process will not impact the outputs (e.g., actions and final policy) too much. More formally, we assume there is a user $u_k$ associated with each episode trajectory (e.g., a multi-step interaction between a user and LLMs) and write $U_K = (u_1, \ldots,u_K) \in \cU^K$ to denote a sequence of $K$ unique users in the process, where $\cU$ is the set of all users. A $k$-th neighboring sequence $U'_K$ differs from $ U_K$ in only one user at $k \in [K]$, i.e., $u_k \neq u'_k$ and $u_j = u'_j$ for $j \neq k$. The output of a learning mechanism $\cM$ on a user sequence $U_K$ is denoted as $\cM(U_K)$, which contains all the actions in the $K$ episodes as well as the final policy. With these, a natural adaptation of standard DP is to require that $ \cM(U_K) \approx_{\varepsilon,\delta} \cM(U'_K)$ for any neighboring sequences. However, this is problematic as changing a user $u_k$ will naturally change the corresponding actions recommended to her during episode $k$ (e.g., a different user with a different prompt for LLMs leads to different responses), thus failing to achieve meaningful DP guarantees. In fact, as shown in~\citet{shariff2018differentially}, even for contextual bandits (i.e., RL with $H=1$), a linear regret has to be sacrificed for meaningful DP guarantees. 

To resolve this conflict, as in prior work on private RL, we adopt the notion of \emph{joint differential privacy} (JDP)~\cite{kearns2014mechanism} from the privacy literature, which slightly relaxes the standard DP by excluding the actions corresponding to the differing user $u_k$, hence accommodating the intrinsic personalization in RL. Formally, we let $\cM_{-k}(U_K) := \cM(U_K) \setminus \{a_{h}^k\}_{h=1}^{H}$ denote the output obtained by excluding the actions during episode $k$ and define JDP in RL as follows as in~\citet{vietri2020private,zhou2022differentiallyprivatereinforcementlearning}.
\begin{definition}[JDP in RL]
    An RL learning algorithm $\cM$ is $(\varepsilon,\delta)$-jointly differentially private if for all $k \in [K]$, all $k$-th neighboring user sequences $U_K$, $U'_K$, and all events $E \subseteq \cA^{H \times [K-1]} \times \Pi$, we have
    \begin{align*}
    \mathbb{P}\left( \cM_{-k}(U_K)\in E \right)
    \le
    e^\varepsilon \cdot \mathbb{P} \left(\cM_{-k}(U'_K)\in E \right) + \delta,
\end{align*}
where $\Pi$ is the class for the final output policy.
\end{definition}
\begin{remark}
We remark that JDP remains a strong privacy guarantee: it protects the information associated with any individual user $u_k$ against arbitrary collusions of other users, provided that $u_k$ does not disclose the actions prescribed to her.
\end{remark}

Therefore, our ultimate goal is to design a private online RL algorithm that satisfies JDP while achieving a sublinear regret, which will be the main focus of the next section. 

%% file: sections/sec-alg.tex
\section{Sample Efficient Private Online RL}
\label{sec:alg}
In this section, we develop model-free private RL algorithms that achieve sublinear regret under the coverability condition. 

\subsection{General MDPs}

We present Algorithm~\ref{alg:general} for general MDPs with stochastic transition probability, which is the most standard setup in the non-private case\footnote{As in previous work, we assume that the initial state is fixed.}. As mentioned before, we take the unconstrained optimization rather than the confidence-based constrained optimization. 
Beyond this, two additional key algorithmic changes due to the privacy constraint are (i) batching policy update and (ii) exponential mechanism, which essentially replaces max with softmax. More specifically, at the beginning of each batch of size $B$, i.e., $k = i\cdot B + 1$ for some $i \ge 0$, it updates the policy via the following two steps. First, it computes the score function for each $f \in \cF$ by combining the Bellman loss 
$L_{\cD}^{\mathsf{BE}}(f)$ with an optimistic term $f_1(s_1):= \max_{a \in \cA} f_1(s_1, a)$ (the value of $f$ at the initial state), through a hyper-parameter $\eta >0$. In particular, the Bellman loss $L_{\cD}^{\mathsf{BE}}(f)$ is given by
\begin{align*}
    L_{\cD}^{\mathsf{BE}}(f):= \sum_{h =1}^H \cE_{\cD,h}(f_h, f_{h+1}) - \inf_{f' \in \cF} \sum_{h=1}^H \cE_{\cD,h}(f'_h, f_{h+1}),
\end{align*}
where Bellman error $\cE_{\cD,h}(f_h, f_{h+1})$ at each $h$ is given by 
\begin{align*}
  \sum_{\tau \in \cD} \left(f_h(s_h, a_h) - r_h - \max_{a'} f_{h+1}(s_{h+1},a')\right)^2.
\end{align*}

Note that the subtraction of the inf term in $L_{\cD}^{\mathsf{BE}}(f)$ is the standard approach to overcoming the double-sampling issue~\cite{antos2008learning,zanette2020learning,jin2021bellman}.
Leveraging the score function $S(f)$, the second step in the update is to sample $f^{(k)}$ from $\cF$ using the exponential mechanism~\cite{mcsherry2007mechanism}, i.e., each $f$ is sampled with probability proportional to $S(f)$ with temperature parameter $\beta_{\varepsilon,\delta}$, which is essentially a softmax rather than the standard max in the non-private case. Then, the updated policy is set to be the greedy policy with respect to $f^{(k)}$. For all other non-update episodes, the policy keeps the same as the previous episode. Finally, it runs the policy $\pi_k$ in the MDP, receives the new trajectory, consisting of $H$-step transition information, and updates dataset so far. 

\begin{algorithm}[!t]
  \caption{Private RL under General MDPs}
  \label{alg:general}
\begin{algorithmic}[1]
    \STATE {\bfseries Input:} $Q$-function class $\cF$, parameter $\eta > 0$,  batch size $B \in \mathbb{N}$, privacy parameters $\varepsilon, \delta$
    \STATE Initialize $\cD \leftarrow \varnothing$.
    \FOR{$k = 1, \ldots, K$}
    
        \IF{$k = i \cdot B + 1$ for some $i \ge 0$} 
            \STATE Compute the score function for each $f$ using $\cD$ :
            \begin{align*}
                 S(f) := \eta f_1(s_1) - L_{\cD}^{\mathsf{BE}}(f).
            \end{align*}
            \STATE Sample $f^{(k)} \in \cF$ via the Exponential Mechanism:
            \begin{align*}
                \mathbb{P} (f) \propto \exp \left(\beta_{\varepsilon,\delta}\cdot S(f) \right).
            \end{align*}
            \STATE Update policy $\pi_k \leftarrow \pi_{{f}^{(k)}}$.
        \ELSE
             \STATE Reuse: $\pi_k \leftarrow \pi_{k-1}$ and $f^{(k)} \leftarrow f^{(k-1)}$
        \ENDIF
        
        \STATE Execute $\pi_k$ for one episode and collect trajectory: $\tau^{(k)} = \{(s_h^{(k)}, a_h^{(k)}, r_h^{(k)})\}_{h=1}^H$.
        \STATE Update dataset: $\cD \leftarrow \cD \cup \{\tau^{(k)} \}$
    \ENDFOR
    \STATE {\bf return} $\hat{\pi} = \mathsf{Unif}(\pi_{1:K})$.
\end{algorithmic}
\end{algorithm}

\textbf{Optimistic exploration.} One key challenge in online reinforcement learning is strategic exploration. In tabular and linear settings, exploration is typically achieved via upper confidence bound (UCB)–type bonuses (e.g.,~\cite{azar2017minimax,ayoub2020model,jin2020provably}), which are also relatively easy to privatize since the required confidence estimates can be computed from low-dimensional sufficient statistics (e.g.,~\cite{vietri2020private,zhou2022differentiallyprivatereinforcementlearning,luyo2021differentially}). Under general function approximation, however, a common exploration strategy is to maintain a data-dependent confidence set (or version space) over candidate functions $f \in \cF$, as in \texttt{GOLF}~\cite{jin2021bellman}. Such confidence-set–based approaches are difficult to privatize, as they require repeatedly constructing and updating data-dependent confidence regions. As mentioned before, an alternative is to directly optimize a single objective that balances estimated value and statistical uncertainty, as in \texttt{MEX}~\cite{liu2023maximize}, an idea that dates back to~\citet{kumar1982}. This formulation is much more amenable to differential privacy, as it avoids explicit confidence sets and enables the use of standard private selection mechanisms such as the exponential mechanism (at least for finite function classes), which is the approach we adopt.

\textbf{Benefits of batching.} 
The key motivation for batching updates in the private setting is to control the composition of privacy loss arising from repeated operations on the same data. Without batching, data collected in the first episode would participate in all $K$ private updates, whereas batching limits this participation to only $O(K/B)$ updates. With an appropriate choice of the batch size $B$, this yields a sublinear rather than linear composition of privacy loss. Thus, tuning $B$ plays a central role in our analysis, as it governs the tradeoff between information freshness and cumulative privacy cost.

Algorithm~\ref{alg:general} enjoys the following regret bound, with its proof given by Appendix~\ref{app:general}.

\begin{theorem}
\label{thm:general}
   Let Assumption~\ref{ass:completeness} hold. Fix privacy parameters $\varepsilon,\delta$, there exist proper choices of $\beta_{\varepsilon,\delta}$, $\eta$ and $B$, such that Algorithm~\ref{alg:general} is $(\varepsilon,\delta)$-JDP while achieving the following regret bound\footnote{We use $\lesssim_{\delta}$ to ignore constant and $\log(1/\delta)$ privacy term for simplicity. See appendix for the detailed bound.} with probability at least $1-\alpha$
   \begin{align*}
       \mathsf{Reg} &\lesssim_{\delta} H\sqrt{C_{\mathsf{cov}} K \log(|\cF| KH/\alpha) \log K} + H \left(C_{\mathsf{cov}} K \log K\right)^{3/5}\cdot\left(\log(|\cF|K/\alpha)/\varepsilon\right)^{2/5}.
   \end{align*}
\end{theorem}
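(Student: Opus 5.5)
The proof has two largely independent parts: a privacy argument and a regret argument. They are coupled only through the choice of $\beta_{\varepsilon,\delta}$ and $B$.

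\textbf{Privacy.} At each update, the exponential mechanism has sensitivity controlled by the change in $S(f)$ when one trajectory $\tau^{(k)}$ is replaced. Since $f_h\in[0,1]$, rewards sum to at most one, and $\max f_{h+1}\le 1$, each squared Bellman residual lies in $[0,4]$. Hence both $\sum_h \cE_{\cD,h}(f_h,f_{h+1})$ and the infimum term change by at most $4H$, so $\Delta S \le 8H$. With temperature $\beta_{\varepsilon,\delta}=\varepsilon_0/(16H)$, each update is $\varepsilon_0$-DP with respect to $\cD$. A given trajectory only enters the updates after its own batch, which is at most $K/B$ updates. Advanced composition then gives $(\varepsilon,\delta)$-DP for the sequence $(f^{(iB+1)})_i$ under the choice $\varepsilon_0 \asymp \varepsilon/\sqrt{(K/B)\log(1/\delta)}$.

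Every action taken by users other than $u_k$, as well as the returned $\hat\pi$, is a function of these sampled functions together with that user's own states. The billboard lemma therefore gives $(\varepsilon,\delta)$-JDP.

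\textbf{Regret.} First I control the exponential mechanism's utility loss. With probability $1-\alpha$, at every update the sample satisfies
\[
S(f^{(k)})\ge \max_f S(f) - \frac{\log(|\cF|K/\alpha)}{\beta},
\]
and $\max_f S(f)\ge S(Q^\star)$. Call this slack $\gamma := \log(|\cF|K/\alpha)/\beta \asymp H\sqrt{K/B}\,\log(|\cF|K/\alpha)/\varepsilon$, up to $\log(1/\delta)$ factors.

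Next I follow the MEX/GOLF analysis. Freedman's inequality plus a union bound, using completeness (Assumption~\ref{ass:completeness}), gives $L^{\mathsf{BE}}_{\cD}(Q^\star)\lesssim H\log(|\cF|KH/\alpha)=:H\iota$. It also gives that for every $f$,
\[
\sum_{j<k}\sum_h \mathbb{E}_{d_h^{\pi_j}}\big[(f_h-\cT_h f_{h+1})^2\big]\lesssim L^{\mathsf{BE}}_{\cD}(f)+H\iota .
\]
Combining this with the near-optimality of $f^{(k)}$ yields two consequences:
\[
J(\pi^\star)-f^{(k)}_1(s_1)\le \frac{H\iota+\gamma}{\eta},
\qquad
\sum_{j<k}\sum_h \mathbb{E}_{d^{\pi_j}_h}\big[\delta_h^{(k)}\big]^2 \lesssim \eta + H\iota+\gamma,
\]
where $\delta^{(k)}_h$ is the Bellman residual of $f^{(k)}$. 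The regret is split as $J(\pi^\star)-J(\pi_k) = [J(\pi^\star)-f_1^{(k)}(s_1)] + \sum_h \mathbb{E}_{d_h^{\pi_k}}[\delta_h^{(k)}]$ by the performance-difference telescoping.

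The \textbf{main obstacle} is the coverability step under batching. The data available at episode $k$ only covers episodes before the current batch start $t_k$, but the estimation bound is needed in terms of all episodes up to $k$. I would follow Xie et al.'s argument with the reference measure $\mu_h$ and the "burn-in" split at $\tilde d_h^{(k)}=\sum_{j<t_k}d_h^{\pi_j}$. The burn-in part costs $\lesssim H C_{\mathsf{cov}}B$, since each state-action pair stays under-covered for at most $B$ additional episodes within a batch. For the rest, Cauchy--Schwarz with $\sum_k d^{\pi_k}/(\tilde d^{(k)}+C\mu)$ gives $\lesssim C_{\mathsf{cov}}\log K$ per step, paying a constant factor for the batch lag. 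This yields
\[
\mathsf{Reg}\lesssim \frac{K(H\iota+\gamma)}{\eta} + H\sqrt{C_{\mathsf{cov}}K\log K\,(\eta+H\iota+\gamma)} + HC_{\mathsf{cov}}B .
\]

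\textbf{Tuning.} Optimizing over $\eta$ gives $\mathsf{Reg}\lesssim H\sqrt{C_{\mathsf{cov}}K\log K\,(\iota+\gamma)}+HC_{\mathsf{cov}}B$. The first contribution reproduces the non-private term $H\sqrt{C_{\mathsf{cov}}K\iota\log K}$. For the privacy contribution, $\sqrt{C_{\mathsf{cov}}K\log K\cdot\sqrt{K/B}\,\log(|\cF|K/\alpha)/\varepsilon}$ is balanced against $C_{\mathsf{cov}}B$. Setting $B\asymp (K\log K)^{3/5}(\log(|\cF|K/\alpha)/\varepsilon)^{2/5}C_{\mathsf{cov}}^{-2/5}$ produces the stated $H(C_{\mathsf{cov}}K\log K)^{3/5}(\log(|\cF|K/\alpha)/\varepsilon)^{2/5}$ term. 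Exact powers of $H$ and $C_{\mathsf{cov}}$ will need care in the final bookkeeping.
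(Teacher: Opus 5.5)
\textbf{Verdict: genuine gap.} Your privacy argument is the same as the paper's: sensitivity $\Delta_S\le 8H$, the exponential mechanism at each batch start, advanced composition over $K/B$ updates, and the billboard lemma. The gap is in the regret argument, at the step where you decouple optimism from the in-sample error. From near-optimality of $f^{(k)}$ you get $J(\pi^\star)-f^{(k)}_1(s_1)\le (H\iota+\gamma)/\eta$. Separately you get
\[
L^{\mathsf{BE}}_{\cD}(f^{(k)})\le H\iota+\gamma+\eta\big(f^{(k)}_1(s_1)-Q^\star_1(s_1)\big)\le \eta+H\iota+\gamma .
\]
This second bound grows linearly in $\eta$, so your intermediate bound
\[
\mathsf{Reg}\lesssim \frac{K(H\iota+\gamma)}{\eta}+H\sqrt{C_{\mathsf{cov}}K\log K\,(\eta+H\iota+\gamma)}+HC_{\mathsf{cov}}B
\]
does not optimize to $H\sqrt{C_{\mathsf{cov}}K\log K(\iota+\gamma)}+HC_{\mathsf{cov}}B$. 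The trade-off between $K(H\iota+\gamma)/\eta$ and $H\sqrt{C_{\mathsf{cov}}K\log K\,\eta}$ has minimum of order $(K(H\iota+\gamma))^{1/3}(H^2C_{\mathsf{cov}}K\log K)^{1/3}$. That is already $K^{2/3}$ with $\gamma=0$. With $\gamma\asymp H\sqrt{K/B}\log(|\cF|K/\alpha)/\varepsilon$, re-balancing against $HC_{\mathsf{cov}}B$ gives $B\asymp K^{5/7}$ and a $K^{5/7}$ privacy term, not $K^{3/5}$. Bounding $f^{(k)}_1-Q^\star_1$ by $1$ discards exactly the $-L^{\mathsf{BE}}/\eta$ term that makes the unconstrained objective work.

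\textbf{The fix} is not to drop the negative loss term. As in the paper, keep the per-episode inequality
\[
J(\pi^\star)-J(\pi_k)\le \sum_{h}\mathbb{E}_{\pi_k}\big[\delta^{(k)}_h\big]-\frac{1}{\eta}\Big(L^{\mathsf{BE}}_{\cD_{[k]}}(f^{(k)})-L^{\mathsf{BE}}_{\cD_{[k]}}(Q^\star)\Big)+\frac{\gamma}{\eta}.
\]
Then:
\begin{enumerate}
\item Lower bound the loss difference by $\frac12\sum_h\sum_{t<[k]}\mathbb{E}_{\pi_t}[(\delta^{(k)}_h)^2]-O(H\iota)$, using the uniform-convergence event under completeness.
\item Sum over $k$ and apply the batched coverability inequality.
\item Use AM--GM in the form $\sqrt{ab}\le \eta a/2+b/(2\eta)$, with suitable constants. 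The aggregated term $\frac{1}{2\eta}\sum_k\sum_{t<[k]}\mathbb{E}_{\pi_t}[(\delta^{(k)}_h)^2]$ then cancels exactly against the negative term.
\end{enumerate}
This gives $\mathsf{Reg}\lesssim K(H\iota+\gamma)/\eta+\eta HC_{\mathsf{cov}}\log K+HC_{\mathsf{cov}}B\log K$. That expression does optimize over $\eta$ to the form your tuning step assumed (up to powers of $H$), and your choice of $B$ then yields the stated bound.

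\textbf{Secondary issue, fixable.} In the batched coverability step, ``paying a constant factor for the batch lag'' is not automatic with Xie et al.'s burn-in threshold $C_{\mathsf{cov}}\mu_h$. Once the lagged cumulative occupancy crosses that threshold, the current cumulative occupancy can still exceed it by a factor of order $B$. You need one of the following:
\begin{itemize}
\item raise the threshold to order $BC_{\mathsf{cov}}\mu_h$; or
\item use the paper's doubling (good/bad batch) argument, which costs $C_{\mathsf{cov}}B\log_2(1+C_{\mathsf{cov}}K/\lambda)$ per step.
\end{itemize}
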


To the best of our knowledge, this is the first regret bound for private online RL.
Further, our algorithm is a model-free approach. We also remark that even for a finite function class $\cF$, our algorithm and its guarantees are useful in that they allow us to tackle meaningful problems that prior work cannot tackle due to the lack of structure (e.g., our experiment in Appendix~\ref{app:experiment}). Moreover, our result can also be extended to the infinite function class, see Appendix~\ref{app:infinite}.

\textbf{Comparison with prior work.} The first term in the above bound matches the one in the non-private case (cf. Theorem 1 in~\citet{xie2022rolecoverageonlinereinforcement}) while the second term is additional cost due to privacy, which is on the order of $K^{3/5}$. This privacy cost matches the state-of-the-art regret for RL with linear function approximation in the model-free setting~\cite{luyo2021differentially}, where a linear MDP as in~\citet{jin2020provably} is considered. On the other hand, we note that for a model-based approach under the linear mixture MDPs~\cite{ayoub2020model} (a different type of linear function approximation), one can achieve $\sqrt{K}$-regret bound under JDP as in~\citet{zhou2022differentiallyprivatereinforcementlearning}. The key difference here is that, instead of a $\mathsf{Poly}(K)$ privacy composition in the model-free setting, one can enjoy a total number of only $\log K$ compositions in the model-based linear mixture MDPs, thanks to the applicability of the binary tree mechanism~\cite{chan2011private} for privately aggregating the sufficient statistics. 
\begin{remark}
   We remark that some follow-up works~\cite{ngo2022improved,sahu2025towards} claim to achieve an improved $\sqrt{K}$-bound for private RL  in the model-free setting under linear MDPs, by reducing the number of compositions to $\log K$ via the technique of rare policy switch~\cite{abbasi2011improved}. However, as mentioned before, their analysis has gaps, making their results ungrounded, see Section~\ref{sec:dis}. 
\end{remark} 

As in the non-private case, one weakness of Theorem~\ref{thm:general} is its requirement of Bellman completeness, which is a much stronger condition than realizability. Moreover, the corresponding algorithm (Algorithm~\ref{alg:general}) requires solving a max-min optimization problem due to the inf operator in the Bellman loss $L_{\cD}^{\mathsf{BE}}(f)$. Also, for some interesting applications like LLMs, a general MDP with stochastic transitions is an overkill, as the transition could be a simple deterministic behavior. Finally, for LLMs, the learner typically has access only to an outcome reward, i.e., a single reward over the entire trajectory.  All of these facts lead us to the topic of the next section.

\subsection{Deterministic MDPs with Outcome Rewards}
We show that for deterministic MDPs with outcome rewards (Definition~\ref{def:det}), one can eliminate both the Bellman completeness condition and the max-min optimization, while still achieving $K^{3/5}$-regret bound under JDP. This can be viewed as the private counterpart of~\citet{chen2025outcomebasedonlinereinforcementlearning}.

\begin{definition}[Deterministic MDPs with outcome rewards]
\label{def:det}
    An MDP is said to be deterministic if its transition kernel P is deterministic, i.e., for any $h \in [H]$ and $s_h \in \cS$, $a_h \in \cA$, there is a unique state $s_{h+1} \in \cS$ such that $P_h(s_{h+1} | s_h, a_h) = 1$. An outcome reward is a single reward signal $r = \sum_{h=1}^H R_h(s_h, a_h)$ for a trajectory.
\end{definition}

A key consequence of deterministic transitions is that the Bellman equation holds almost surely, rather than only in expectation. Specifically, for any $(s,a)$ in step $h$
\begin{align*}
    Q_h^{\star}(s,a) &= R_h(s,a) + V_{h+1}^{\star}(s_{h+1}).
\end{align*}

This stands in sharp contrast to the stochastic setting, where the Bellman equation involves an expectation over next states. Hence, for any trajectory $\tau$, it holds that
\begin{align*}
    r(\tau) = \sum_{h=1}^H R_h(s_h, a_h) = \sum_{h=1}^H \left[ Q_h^{\star}(s_h, a_h) - V_{h+1}^{\star}(s_{h+1}) \right].
\end{align*}

For any $f\in\cF$, let $f_h(s):=\max_{a\in\cA}f_h(s,a)$ and $f_{H+1}\equiv0$. We define the outcome reward model $R^{(f)}:(\cS\times\cA)^H\to\mathbb{R}$ by
\begin{align*}
    R^{(f)}(\tau) := \sum_{h=1}^H \left[f_h(s_h, a_h) - f_{h+1}(s_{h+1}) \right].
\end{align*}

Motivated by the above observation, we replace the Bellman error loss used in general MDPs with the Bellman residual loss. Given a dataset $\cD$ consisting of collected trajectories, the loss is defined as
\begin{align*} 
    L_{\cD}^{\mathsf{BR}}(f) := \sum_{(\tau, r) \in \cD} \left( \sum_{h=1}^H \left[f_h(s_h, a_h) - f_{h+1}(s_{h+1}) \right] - r \right)^2.
\end{align*}

Unlike the stochastic case, this loss does not require subtracting an infimum over the function class to address the double-sampling issue. At each batch update, we combine this loss with the distributional optimistic value
\begin{align*}
    \bar{f}_1 := \mathbb{E}_{s_1 \sim \rho} \max_{a\in\cA} f_1(s_1,a),
\end{align*}
where $\rho$ is known as part of the MDP specification. This choice aligns the update rule with the regret objective $J(\pi)=\mathbb{E}_{s_1\sim\rho}V^\pi(s_1)$ and avoids making the score depend on a private sampled initial state.

\begin{algorithm}[!h]
  \caption{Private RL under Deterministic MDPs}
  \label{alg:deterministic}
\begin{algorithmic}[1]
    \STATE {\bfseries Input:} $Q$-function class $\cF$, parameter $\eta > 0$, batch size $B \in \mathbb{N}$, privacy parameters $\varepsilon, \delta$
    \STATE Initialize $\cD \leftarrow \varnothing$.
    \FOR{$k = 1, \ldots, K$}
        \IF{$k = i \cdot B + 1$ for some $i \ge 0$ }
            \STATE Compute the score function for each $f$ using $\cD$
            \begin{align*}
                S(f) := \eta \bar{f}_1 - L_{\cD}^{\mathsf{BR}}(f), \text{ where } \bar{f}_1 := \mathbb{E}_{s_1\sim\rho}\max_{a\in\cA}f_1(s_1,a).
            \end{align*}
            \STATE Sample $f^{(k)} \in \mathcal{F}$ with the Exponential Mechanism:
            \begin{align*}
                \mathbb{P} (f) \propto \exp \left(\beta_{\varepsilon,\delta}\cdot S(f) \right).
            \end{align*}
            \STATE Update policy $\pi_k \leftarrow \pi_{f^{(k)}}$.
        \ELSE
            \STATE Reuse: $\pi_k \leftarrow \pi_{k-1}$ and $f^{(k)} \leftarrow f^{(k-1)}$
        \ENDIF
        
        \STATE Execute $\pi_k$ for one episode and collect trajectory: $\tau^{(k)} = \{(s_h^{(k)}, a_h^{(k)})\}_{h=1}^H$ and outcome reward $r^{(k)}$.
        \STATE Update dataset: $\cD \leftarrow \cD \cup \{(\tau^{(k)}, r^{(k)})\}$
    \ENDFOR
    \STATE \textbf{return:} $\hat{\pi} \leftarrow \mathsf{Unif} \left(\pi_{1:K}\right)$.
\end{algorithmic}
\end{algorithm}

\textbf{Algorithm.} Compared with Algorithm~\ref{alg:general}, Algorithm~\ref{alg:deterministic} has only minor modifications: it replaces the Bellman-error loss $L_{\cD}^{\mathsf{BE}}(f)$ with the outcome Bellman-residual loss (i.e., $L_{\cD}^{\mathsf{BR}}(f)$ above) computed from trajectory-level rewards, and uses the distributional optimistic term $\bar{f}_1$ in the score function. Consequently, it avoids the infimum correction in $L_{\cD}^{\mathsf{BE}}(f)$, and thus removes the max-min optimization while requiring only realizability rather than Bellman completeness.
Algorithm~\ref{alg:deterministic} enjoys the following regret bound, which scales with the following
coverability
\begin{align*}
    C'_\mathsf{cov} = \mathbb{E}_{s_1 \sim \rho} C_{\mathsf{cov}}(\Pi_{\cF};M_{s_1}^\star),
\end{align*}
where $M_{s_1}^{\star}$ is the MDP with deterministic initial state $s_1$ and the same transition
dynamics as $M^\star$. In general, $C'_\mathsf{cov}$ is always an upper bound on the coverability $C_\mathsf{cov}$. The detailed proof is given by Appendix~\ref{app:deter}.

\begin{theorem}
\label{thm:deter}
    Suppose that realizability holds (s.t. $Q^\star \in \cF$), and the initial-state distribution $\rho$ is known as part of the public MDP specification. Fix privacy parameters $\varepsilon,\delta$, there exist proper choices of $\beta_{\varepsilon,\delta}$, $\eta$ and $B$, such that Algorithm~\ref{alg:deterministic} is $(\varepsilon,\delta)$-JDP while achieving the following regret bound with probability at least $1-\alpha$
   \begin{align*}
       \mathsf{Reg} &\lesssim_{\delta}
        H^{2}\sqrt{C'_{\mathsf{cov}}K\log(|\cF|K/\alpha)\log K} + H^{7/5}\left(C'_{\mathsf{cov}}K\log K\right)^{3/5}
        \left( \log(|\cF|K/\alpha) / \varepsilon\right)^{2/5}
   \end{align*}
\end{theorem}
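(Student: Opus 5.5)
The proof has two parts, privacy and regret, glued together by the choice of $(\beta_{\varepsilon,\delta},\eta,B)$.

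\textbf{Privacy.} The output released outside episode $k$ is determined by the sampled functions $f^{(k')}$ at update times together with the fresh trajectories of the other users. By post-processing, it suffices that the sequence of exponential-mechanism draws is $(\varepsilon,\delta)$-DP with respect to changing one trajectory $(\tau^{(k)},r^{(k)})$. Under Definition~\ref{def:det}, with $f_h\in[0,1]$ and $r\in[0,1]$, each squared residual in $L^{\mathsf{BR}}_{\cD}$ lies in $[0,(H+1)^2]$. Hence the score $S(f)$ has sensitivity $\Delta=O(H^2)$; the term $\eta\bar f_1$ is data-independent because $\rho$ is public. Each draw is therefore $2\beta\Delta$-DP.

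A given trajectory participates in at most $K/B$ updates. Advanced composition then gives total privacy loss $\varepsilon\approx 2\beta\Delta\sqrt{(K/B)\log(1/\delta)}$, so we choose $\beta\asymp \varepsilon\sqrt{B}/(H^2\sqrt{K\log(1/\delta)})$. A technical caveat is that later users' trajectories depend on earlier released policies. This is handled by the standard adaptive-composition and billboard argument of JDP: actions of user $j$ depend only on public $f^{(j)}$ and $u_j$.

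\textbf{Regret.} The approximate-maximizer property of the exponential mechanism gives, with probability $1-\alpha/K$ per update,
\[
S(f^{(k)})\ge S(Q^\star)-\frac{\log(|\cF|K/\alpha)}{\beta}.
\]
With realizability, $\mathbb E[r\mid\tau]=R^{(Q^\star)}(\tau)$ exactly, since transitions are deterministic. A Freedman or square-loss concentration over the finite class then gives
\[
L^{\mathsf{BR}}(f)-L^{\mathsf{BR}}(Q^\star)\ge \tfrac12\sum_{j<k}\bigl(R^{(f)}(\tau^{(j)})-R^{(Q^\star)}(\tau^{(j)})\bigr)^2-O(\log(|\cF|K/\alpha)).
\]
Combining, $f^{(k)}$ is optimistic up to slack,
\[
\eta(\bar f^{(k)}_1-J(\pi^\star))\ge \tfrac12\textstyle\sum_{j<k}(\Delta_j^{(k)})^2-O(\log(|\cF|K/\alpha))-\log(|\cF|K/\alpha)/\beta .
\]
The per-episode regret is split as $J(\pi^\star)-J(\pi_k)\le(\bar f^{(k)}_1-J(\pi_k))+(\text{slack})/\eta$. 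Since $\pi_k$ is greedy for $f^{(k)}$ and transitions are deterministic, the first term is $\mathbb E_{\pi_k}[R^{(f^{(k)})}(\tau)-r(\tau)]$. This equals a sum of per-step Bellman residuals $\delta_h^{(k)}(s,a)$ along $d^{\pi_k}_h$, conditionally on $s_1$.

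For each $s_1$, a coverability argument in $M^\star_{s_1}$ converts $\sum_k\mathbb E_{d_h^{\pi_k}}[\delta^{(k)}_h]$ into in-sample squared residuals, at the cost of $\sqrt{C_{\mathsf{cov}}(M^\star_{s_1})\,K\log K}$ times their square root. The argument is a pigeonhole over $\mu_h$ with a burn-in, adapted to batching: data are only refreshed every $B$ episodes, so within-batch lag adds $O(HBC\log K)$. Taking $\mathbb E_{s_1}$ yields $C'_{\mathsf{cov}}$.

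I expect this step to be the main obstacle. The loss controls only the squared \emph{trajectory-level} residual $(\sum_h\delta_h)^2$, not per-step ones. I would first handle this by noting that, in a deterministic MDP with fixed $s_1$, a trajectory of $\pi_k$ is a single path, so $\sum_h\delta_h$ along it is exactly the outcome residual. I would then bound per-step coverage with an extra $H$ factor via Cauchy--Schwarz, which accounts for the $H^2$ in the first term.

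\textbf{Balancing.} Summing, we get
\[
\mathsf{Reg}\lesssim H^2\sqrt{C'K\log(|\cF|K/\alpha)\log K}+\eta^{-1}K\log(|\cF|K/\alpha)/\beta+\eta H C'\log K+HBC'\log K.
\]
Optimizing $\eta$ turns the middle two terms into $H\sqrt{C'K\log K\log(|\cF|K/\alpha)/\beta}$. Substituting $\beta\propto\varepsilon\sqrt B/(H^2\sqrt K)$ gives a privacy term $\asymp H^{2}\bigl(C'\log K\,\log(|\cF|K/\alpha)/\varepsilon\bigr)^{1/2}K^{3/4}B^{-1/4}$. Balancing this against $HBC'\log K$ gives $B\asymp K^{3/5}(\cdot)^{2/5}$ and the stated $K^{3/5}$ term with its $(\log(|\cF|K/\alpha)/\varepsilon)^{2/5}$ factor. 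The final $H$-exponent of $7/5$ then comes out of this same substitution.
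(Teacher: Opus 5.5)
There is a genuine gap. Your privacy argument matches the paper: sensitivity $O(H^2)$ with $\rho$ public, advanced composition over $K/B$ updates, and the billboard lemma. So does the regret skeleton: near-optimality of the exponential-mechanism draw against $Q^\star$, the identity $\bar f^{(k)}_1-J(\pi_k)=\mathbb{E}_{\pi_k}[R^{(f^{(k)})}(\tau)-r(\tau)]$, conditioning on $s_1$, and balancing $\eta$ and then $B$.

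The gap sits at the step you flagged as the main obstacle, and the proposed fix goes the wrong way. A per-step coverability argument bounds $\sum_k\mathbb{E}_{d_h^{\pi_k}}[\delta_h^{(k)}]$ by the per-step training errors $\sum_{t<[k]}\mathbb{E}_{d_h^{\pi_t}}[(\delta_h^{(k)})^2]$. However, $L^{\mathsf{BR}}_{\cD}$ only controls the trajectory-level quantity $\sum_{t<[k]}\bigl(\sum_h\delta_h^{(k)}(s_h^{(t)},a_h^{(t)})\bigr)^2$. Cauchy--Schwarz gives $(\sum_h\delta_h)^2\le H\sum_h\delta_h^2$, which bounds the quantity you control by the one you need, not conversely. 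The converse fails because residuals can cancel along a path.

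For a concrete failure, take $H=2$ and $\cA=\{L,R\}$, with both actions at $s_1$ leading to the same $s_2$. Put residual $+c$ on $(s_1,L)$, $-c$ on $(s_2,L)$, and zero elsewhere. Then the paths $(L,L)$ and $(R,R)$ have zero outcome error, yet $(L,R)$ has outcome error $c$, even though both of its pairs were already visited. So no per-step coverage argument, with any $H$-factor, closes the loop. Running the pigeonhole over whole paths would work, but it yields the number of reachable paths (possibly exponential in $H$) rather than $C'_{\mathsf{cov}}$.

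The missing idea is the linearization in Lemma~\ref{lem:deterministic_bound}. Fix $s_1$ and index the $N(s_1)$ triples $(h,s,a)$ reachable under $\Pi_{\cF}$. Let $\psi^{(k)}$ collect the per-step residuals of $f^{(k)}$, and let $\phi^{(t)}$ be the path-indicator (occupancy) vector of $\pi_t$, so $\|\phi^{(t)}\|_2^2\le H$. Then the test error is $\langle\psi^{(k)},\phi^{(k)}\rangle$ and the training error is $\sum_{t<[k]}\langle\psi^{(k)},\phi^{(t)}\rangle^2=\|\psi^{(k)}\|_{V_{[k]-1}}^2-\lambda\|\psi^{(k)}\|_2^2$. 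This is a linear-bandit structure in which cancellations are harmless.

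From there the argument runs as follows.
\begin{itemize}
\item Cauchy--Schwarz in the $V_{[k]-1}$-norm, together with the elliptical potential lemma in dimension $N(s_1)$, gives the $\sqrt{N(s_1)\log(1+KH/\lambda)}$ term.
\item Batching is handled by a determinant-doubling count. At most $O(N(s_1)\log(1+KH/\lambda))$ batches contain a round with $\|\phi^{(k)}\|_{V_{[k]-1}^{-1}}>2\|\phi^{(k)}\|_{V_{k-1}^{-1}}$, and each such batch costs at most $B$.
\item Coverability enters only at the end, via $N(s_1)\le H\,C_{\mathsf{cov}}(\Pi_{\cF};M^\star_{s_1})$, since each greedy policy hits one pair per layer.
\item This factor $H$, together with $\kappa=O(H^3\log(|\cF|K/\alpha))$, is the source of the $H^2$ and $H^{7/5}$ in the bound, not a Cauchy--Schwarz over steps.
\end{itemize}

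Two smaller points. First, the per-$s_1$ argument needs the population training error $\sum_{t<[k]}\mathbb{E}_{s_1\sim\rho,\pi_t}[e^{(k)}(\tau)^2]$. Your in-sample sum over realized trajectories with different initial states is not enough. Since $r=R^{(Q^\star)}(\tau)$ exactly here, this empirical-to-population conversion is the only concentration actually needed. Second, optimizing $\eta$ against $\eta HC'\log K$ gives a $\sqrt H$ prefactor, not $H$. With your $H^2$ privacy term, the balancing would give $H^{9/5}$ rather than $H^{7/5}$.
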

The first term matches the one in~\citet{chen2025outcomebasedonlinereinforcementlearning} while the second term is the privacy cost of order $K^{3/5}$.
\begin{remark}
    We remark that stochastic initial states are essential in the deterministic setting considered here. With a fixed initial state and deterministic transitions, the interaction can become degenerate; using $s_1\sim\rho$ leads to a meaningful regret objective and is consistent with the distributional optimistic term used by Algorithm~\ref{alg:deterministic}.
\end{remark}

\begin{remark}
    To illustrate our algorithms and show privacy-utility tradeoff, we include a small proof-of-concept simulation in Appendix~\ref{app:experiment}. The experiment is intentionally minimal and is not meant as a comprehensive empirical study; rather, it demonstrates in a controlled deterministic outcome-reward setting that batching and privacy noise introduce the expected slowdown, while larger privacy parameters lead to faster stabilization. We note that even for this simple MDP, prior work on tabular or linear private RL cannot be applied, demonstrating the usefulness of our algorithm.
\end{remark}

\subsection{Proof Sketch}
\label{sec:ps}
We present a proof sketch for Theorem~\ref{thm:general} while also mentioning the key difference in the proof for Theorem~\ref{thm:deter}.

\textbf{Privacy analysis.} By the \emph{Billboard Lemma}~\cite{hsu2016}, for $(\varepsilon,\delta)$-JDP guarantees, it suffices to show that the computation of the sequence of $f^{(k)}$ satisfies $(\varepsilon,\delta)$-DP in the standard sense. Since there is a total of $M= K/B$ updates, by the advanced composition theorem~\cite{dwork2010boosting}, for a final $(\varepsilon,\delta)$-DP, it suffices to ensure that each update is $(\varepsilon_0,0)$-DP with $\varepsilon_0 = \varepsilon/(2\sqrt{2M \log(1/\delta)})$. Thus, setting $\beta_{\varepsilon,\delta} = \frac{\varepsilon_0}{2\Delta_S}$ in the exponential mechanism gives the result, where $\Delta_S = O(H)$ is the sensitivity of our score function $S(f)$.

\textbf{Regret analysis.}
We start with the regret decomposition (for the fixed initial state assumed for Theorem~\ref{thm:general}). 
\vspace{-0.5em}
\begin{align*}
    \mathsf{Reg}=  \underbrace{\sum_{k=1}^K V_1^{\star}(s_1) - f_1^{(k)}(s_1)}_{\mathsf{T}_1} +  \underbrace{ \sum_{k=1}^K f_1^{(k)}(s_1) - V_1^{\pi_k}(s_1)}_{\mathsf{T}_2}.
\end{align*}
To bound $\mathsf{T}_1$, we first \emph{suppose that $f^{(k)}$ is obtained via max rather than softmax}. Then, we have 
\vspace{-0.5em}
\begin{align*}
    \eta f_1^{(k)}(s_1) - L_{\cD_{[k]}}^{\mathsf{BE}}(f^{(k)}) \ge \eta V_1^{\star}(s_1) - L_{\cD_{[k]}}^{\mathsf{BE}}(Q^{\star}),
\end{align*}
where we let $[k]$ denote the starting point of the batch that $k$ belongs to and let $\cD_{[k]}$ be the dataset at the start of episode $[k]$. That is, if $i \cdot B + 1 \le k < (i+1) \cdot B + 1$ for some $i \ge 0$, then $[k] = i \cdot B + 1$ and $\cD_{[k]}$ contains all episodes from $1$ to $i\cdot B$. The above inequality holds directly by the batching schedule and the updated rule, provided $f^{(k)}$ is the maximum of the score function. However, due to the softmax of the exponential mechanism, we only have approximate maximization, thus giving that w.p. $1-\alpha$,
\vspace{-0.5em}
\begin{align*}
    \eta f_1^{(k)}(s_1) - L_{\cD_{[k]}}^{\mathsf{BE}}(f^{(k)}) \ge \eta V_1^{\star}(s_1) - L_{\cD_{[k]}}^{\mathsf{BE}}(Q^{\star}) - \mathsf{Err}_{\mathsf{priv}},
\end{align*}
where $\mathsf{Err}_{\mathsf{priv}}$ is the privacy cost, and in particular for the exponential mechanism   $\mathsf{Err}_{\mathsf{priv}} = O(1/\beta_{\varepsilon,\delta}\log(|\cF|K/\alpha))$. Thus, re-arranging the above inequality, we bound $\mathsf{T}_1$ as 
\vspace{-0.5em}
\begin{align*}
    \mathsf{T}_1 \le \sum_{k=1}^K  \frac{1}{\eta} \cdot \left(L_{\cD_{[k]}}^{\mathsf{BE}}(Q^{\star}) - L_{\cD_{[k]}}^{\mathsf{BE}}(f^{(k)}) + \mathsf{Err}_{\mathsf{priv}}\right).
\end{align*}
\vspace{-0.1em}

Further, by the standard uniform convergence bound (cf.~\citet{jin2021bellman}), we can bound the empirical difference by the expectation of the Bellman error plus some statistical error, under the Bellman completeness assumption (Assumption~\ref{ass:completeness}). With this result, we have the final high probability bound for $\mathsf{T}_1$ as
\vspace{-0.5em}
\begin{align}
    \mathsf{T}_1 &\lesssim 
    \frac{1}{\eta} \left( K \cdot \mathsf{Err}_{\mathsf{stat}} + K \cdot \mathsf{Err}_{\mathsf{priv}}\right) - \frac{1}{\eta}{\left({\sum_{h=1}^H \sum_{k=1}^K \sum_{t < [k]}\mathbb{E}_{\pi_t}[e_h^{(k)}(s_h, a_h)^2]}\right)}\label{eq:empirical},
\end{align}

where $\mathsf{Err}_{\mathsf{stat}}$ is the statistical error, which can be bounded by $H \log(|\cF|KH/\alpha)$ for a finite class $\cF$ and $e_h^{(k)}(s, a):= f_h^{(k)}(s, a) - [\cT_h f_{h+1}^{(k)}(s,a)]$.

We now turn to $\mathsf{T}_2$, i.e., the on-policy error. First, by the standard performance difference lemma~\cite{kakade2002approximately}, we have that
\vspace{-0.5em}
\begin{align*}
    \mathsf{T}_2 = \sum_{h=1}^H \sum_{k=1}^K \mathbb{E}_{\pi_k}[e_h^{(k)}(s_h, a_h)].
\end{align*}
\vspace{-0.1em}
The ultimate goal is to relate such ``test'' error above to the ``training error'', i.e.,~\eqref{eq:empirical}, through the notion of coverability. This is the key challenge in our analysis due to batching, where the standard result (e.g.,~\cite{chen2025outcomebasedonlinereinforcementlearning}) cannot be applied. To this end, we derive a new inequality for such a key step in the batching setting. Specifically, we establish that for any $\lambda \ge 1$
\vspace{-0.5em}
\begin{align*}
    \sum_{k=1}^K \mathbb{E}_{\pi_k}\abs{e_h^{(k)}(s_h, a_h)} &\lesssim  C_{\mathsf{cov}} \cdot B \cdot \log K + \sqrt{C_{\mathsf{cov}}\log K \cdot \left(K \lambda  + \sum_{k=1}^K \sum_{t < [k]}\mathbb{E}_{\pi_t}[e_h^{(k)}(s_h, a_h)^2]\right) },
\end{align*}

where the first term is the cost due to batch update, and the second term is related to the negative term in $\mathsf{T}_1$ (cf.~\eqref{eq:empirical}), allowing for the cancellation effect.

Finally, by Cauchy-Schwarz inequality, plugging the above bound into $\mathsf{T}_2$, and merging with $\mathsf{T}_1$, we have
\vspace{-0.3em}
\begin{align} \label{eq:reg_decomp}
    \mathsf{Reg} \lesssim \underbrace{(1/\eta)\cdot  K \cdot \mathsf{Err}_{\mathsf{stat}} + \eta \cdot H \cdot C_{\mathsf{cov}}\log K}_{\text{(a)}}
     + \underbrace{H \cdot C_{\mathsf{cov}} \cdot B \cdot \log K }_{\text{(b)}} + \underbrace{(1/\eta)\cdot  K \cdot \mathsf{Err}_{\mathsf{priv}}}_{\text{(c)}}
\end{align}

where (a) is the same term as in the non-private case, (b) is the cost due to batching and (c) is the cost due to privacy, which depends on $B$. Thus, tuning $B$ and $\eta$ optimally gives us the final regret bound.

\begin{remark}
 With the above final generic regret bound, one can see that without privacy, by setting $B = O(\sqrt{K})$, the regret bound still maintains as $\sqrt{K}$ with dependence on the coverability. Hence, as a by-product, our result also gives a $\sqrt{K}$ regret with $\sqrt{K}$ update frequency under the coverability, in contrast to the new notion of Eluder-Condition class in~\citet{xiong2023general}.  
\end{remark}

\begin{remark}[Difference in the proof of Theorem~\ref{thm:deter}]
The proof essentially follows from the same flow as in the general MDPs for Theorem~\ref{thm:general}. The key technical difference is a new inequality to relate the on-policy ``test error'' to the ``training'' error, by fully utilizing the deterministic nature, which is our key technique contribution. As in the general MDPs, we believe that this inequality can find broad applications. Another difference is the treatment of the initial state: the algorithm uses the distributional optimistic term $\bar f_1=\mathbb{E}_{s_1\sim\rho}\max_a f_1(s_1,a)$ rather than the value at a sampled initial state, and the proof conditions on a fixed $s_1$ before averaging over $s_1\sim\rho$. Finally, the Bellman-error loss is replaced by the outcome Bellman-residual loss $L_{\cD}^{\mathsf{BR}}(f)$, so the proof requires only realizability rather than Bellman completeness, while the score sensitivity becomes $O(H^2)$.
\end{remark}

%% file: sections/sec-conclusion.tex
\section{Concluding Remarks}

\label{sec:dis}

\textbf{General MDPs with outcome rewards.} In this setting, we can easily extend our algorithm and analysis to derive a $K^{3/5}$-regret bound under JDP. In particular, following the non-private case~\cite{chen2025outcomebasedonlinereinforcementlearning}, we add another loss for reward model estimation to our Algorithm~\ref{alg:general} and modify the way of data collection with a helper reference policy. These changes will not impact the final regret order. In addition, if $\cF$ is parametric with parameter $\theta \in \Real^d$, one can potentially bound the term $\mathsf{Err}_{\mathsf{priv}}$ via DP-ERM for the empirical loss $F_{\cD}(\theta):= \eta f_{\theta,1}(s_1) - L_{\cD}^{\mathsf{BE}}(f_{\theta})$, thus bridging with the literature on private (minimax) optimization (e.g.,~\cite{chaudhuri2011differentially,bassily2014private,zhang2022bring}).

\textbf{Coverability vs Eluder condition.} Instead of using coverability, another related approach is based on Bellman-Eluder dimension, such as the Eluder-Condition class of~\citet{xiong2023general}, which provides a broad framework beyond several standard structural models. However, Bellman-Eluder-dimension-based analyses can be suboptimal in some cases, yielding a $T^{2/3}$ regret rate even when a $\sqrt{T}$ bound is achievable under coverability, see Proposition 11 in~\citet{xie2022rolecoverageonlinereinforcement}. Going beyond coverability may require modifying the $\mathsf{SEC}$ of~\citet{xie2022rolecoverageonlinereinforcement}, in a way similar to how~\citet{xiong2023general} modified the Bellman-Eluder dimension; we leave this as an interesting open problem.

\textbf{Infinite function class.} Our general regret bound in Section~\ref{sec:ps} provides us with some guidelines for dealing with an infinite function class $\cF$. Specifically, we only need to focus on the two error terms $\mathsf{Err}_{\mathsf{stat}}$ and $\mathsf{Err}_{\mathsf{priv}}$. For $\mathsf{Err}_{\mathsf{stat}}$, one can follow the standard approach in the non-private case to adopt a covering number argument, replacing the $|\cF|$ by the covering number (with some approximation error). For $\mathsf{Err}_{\mathsf{priv}}$, one can also potentially \emph{explicitly} construct a covering set in the algorithm and apply the exponential mechanism on the finite covering set. Due to the page limit, we give a formal guarantee in Appendix~\ref{app:infinite} for more details.

\textbf{Gaps in private RL with linear MDPs.} In both works~\cite{ngo2022improved,sahu2025towards}, the authors adopt the idea of the rare policy switch via the standard determinant trick argument over the design matrix. This is a \emph{data-dependent adaptive} update rather than the fixed-size batching in our paper, hence enabling only a total of $\log K$ updates, which further reduces the privacy composition. However, due to its data-dependent nature, one also has to privatize the design matrix. This leads to the gap in their analysis, as the standard condition for the determinant trick fails for a private design matrix. In particular, ~\citet{ngo2022improved} incorrectly handles monotonicity in both Lemma 31, concerning log-order switches, and Equation 20, when applying the standard determinant trick ~\cite{abbasi2011improved}. These errors subsequently propagate to the follow-up work by ~\citet{sahu2025towards}, see Appendix~\ref{app:gaps} for more detail.

\textbf{Conclusion.} In this paper, we established the first theoretical guarantees for private online RL with general function approximation. By combining a batched policy update scheme with an unconstrained exploration objective and the exponential mechanism, we showed that sublinear regret remains achievable under JDP, with regret scaling as $K^{3/5}$ and depending on the standard notion of coverability. Our analysis not only matches the best known model-free bounds for linear function approximation, but also yields new regret guarantees for RL with adaptivity constraints even in the non-private setting. We also clarified fundamental gaps in recent claims of improved regret bounds for private RL with linear MDPs. We believe our framework provides a principled foundation for future work on private RL with rich function classes, outcome rewards, and alternative privacy mechanisms.

%% file: sections/app-related.tex
\section{Detailed Related Work}
\label{app:related}

\textbf{Private RL.} 
Under the central trust model of joint differential privacy (JDP), the first theoretical guarantees for private reinforcement learning were established by~\citet{vietri2020private} in the tabular setting, via privatized \texttt{UCB-VI}–type algorithms (e.g.,~\cite{azar2017minimax}). Subsequently,~\citet{chowdhury2022differentially} improved the regret bounds and introduced the first policy-based (as opposed to value-based) private tabular RL algorithm. The current state of the art in the tabular regime is due to~\citet{qiao2023near}, which leverages Bernstein-type exploration bonuses.

Moving beyond tabular environments, existing theoretical results for private RL are largely restricted to linear function approximation. In particular,~\citet{zhou2022differentially} obtained $\sqrt{K}$ regret bounds for private RL with linear function approximation in the model-based setting via linear mixture MDPs~\cite{ayoub2020model}, while~\citet{luyo2021differentially} provided the first model-free result for linear MDPs~\cite{jin2020provably}, achieving a regret bound of order $K^{3/5}$. Motivated by this separation, several recent works~\cite{ngo2022improved,sahu2025towards} revisit the model-free setting and claim to close the gap by deriving $\sqrt{K}$ regret bounds. However, we identify a common and fundamental gap in their analyses (cf. Section~\ref{sec:dis}), rendering these improved bounds ungrounded. As a result, for private RL with linear function approximation, the best known regret guarantee in the model-free setting remains $K^{3/5}$. We also note in passing that a parallel line of work studies private RL under the stronger local differential privacy model—without a trusted central learner—which is beyond the scope of this paper~\cite{garcelon2021local,chowdhury2022differentially,liao2023locally}.

\textbf{RL with general function approximation.} 
In the non-private setting, substantial recent progress has been made in understanding exploration complexity beyond linear function approximation. Prominent complexity measures include the Eluder dimension~\cite{wang2020reinforcement,russo2013eluder,osband2014model}, the Bellman-Eluder dimension~\cite{jin2021bellman}, Bellman rank~\cite{jiang2017contextual}, witness rank~\cite{sun2019model}, bilinear classes~\cite{du2021bilinear}, and decision-estimation coefficients (DEC)~\cite{foster2021statistical}. Among these,~\citet{xie2022rolecoverageonlinereinforcement} introduced the notion of \emph{coverability} and showed that it suffices to guarantee sample-efficient online RL with general function approximation.

On the algorithmic side, while most prior works rely on constrained optimization over confidence sets,~\citet{liu2023maximize} proposed an unconstrained optimization framework and established sublinear regret guarantees in terms of the generalized Eluder coefficient (GEC)~\cite{zhong2022gec}. This complexity measure is closely related to the sequential extrapolation coefficient (SEC), introduced by~\citet{xie2022rolecoverageonlinereinforcement} as a generalization of coverability. While these works typically assume access to per-step process rewards,~\citet{chen2025outcomebasedonlinereinforcementlearning} recently took the first step toward studying RL with general function approximation under outcome rewards, relying on a similar unconstrained optimization approach in~\citet{liu2023maximize} with additional algorithmic changes for outcome reward.

\textbf{RL with adaptivity constraints.} 
This line of work studies RL under restrictions on how frequently the policy can be updated, motivated by practical concerns such as policy switching costs. A simple and data-independent approach is fixed-size batching, where the policy is updated only at the beginning of each batch. This setting has been extensively studied in bandits (e.g.,~\cite{gao2019batched,han2020sequential}) and later extended to RL with linear function approximation~\cite{wang2020reinforcement}. More recently,~\citet{xiong2023general} analyzed batching under general function approximation and established sublinear regret guarantees for a newly introduced class of MDPs termed the Eluder-Condition.
An alternative approach is data-dependent rare policy switching, dating back to~\citet{abbasi2011improved} in the bandit setting. Compared to fixed-size batching, rare policy switching typically enables $\sqrt{K}$ regret with only $O(\log K)$ policy updates, rather than $O(\sqrt{K})$. Such results have been obtained for tabular~\cite{baiyu2019lowswitchcosttabular}, linear~\cite{quanquangu2021linearlowcost}, and general function approximation settings~\cite{kong2021online,qiao2023logarithmic,xiong2023general}. 
Both batching and rare policy switching have also been employed in private bandits and private RL to control privacy loss composition. For example,~\citet{chowdhury2022shuffle} applies batching to study shuffle privacy in linear contextual bandits, and~\citet{luyo2021differentially} used it for private RL with linear function approximation. Meanwhile, rare policy switching with $O(\log K)$ updates is the primary technique used in~\cite{ngo2022improved,sahu2025towards} to claim $\sqrt{K}$ regret bounds for private RL; however, due to additional privacy noise, these analyses are flawed, as discussed in detail in Section~\ref{sec:dis}.

%% file: sections/app-infinite.tex
\section{Detailed Discussion for Infinite Function Class} \label{app:infinite}

To handle an infinite function class, we define the covering number as follows:
\begin{definition}
    Let $\cF \subseteq \{ f = (f_1, \ldots, f_H): f_h: \cS \times \cA \rightarrow \mathbb{R} \}$, and define
    \begin{align*}
        \| f-g \|_\infty := \max_{h \in [H]} \sup_{(s,a) \in \mathcal{S} \times \mathcal{A}} |f_h(s,a) - g_h(s,a)|.
    \end{align*}
    For $\beta \ge 0$, a set $\cC_\beta \subseteq \cF$ is a $\beta$-cover of $\mathcal{F}$, if for every $f \in \mathcal{F}$, there exists $\tilde{f} \in \mathcal{C}_\beta$ such that $\| f - \tilde{f} \|_\infty \le \beta$. The $\beta$-covering number is
    \begin{align*}
        \mathcal{N}(\mathcal{F}, \beta) := \min \{ |\mathcal{C}_\beta|: \mathcal{C}_\beta \text{ is a } \beta \text{-cover of } \mathcal{F} \}.
    \end{align*}
\end{definition}

\textbf{Algorithm.}  To handle an infinite function class, we only need to change the \textbf{input} of Algorithm~\ref{alg:general}, specifically, fix $\beta > 0$, explicitly construct a finite $\beta$-cover $\mathcal{C}_\beta \subseteq \mathcal{F}$ and replace $\mathcal{F}$ by $\mathcal{C}_\beta$ in Algorithm~\ref{alg:general}. Hence, in each update episode $k = i B + 1$, we compute scores on this finite $\mathcal{C}_\beta$ instead of infinite $\mathcal{F}$, which allows us to run the exponential mechanism.

Thus, we can derive a formal corollary:
\begin{corollary}[Infinite class reduction via explicit covering]
    Assume the same conditions as Theorem~\ref{thm:general}, except that $\mathcal{F}$ can be infinite. Fix $\beta > 0$ and run the above revised algorithm with an explicit $\beta$-cover $\mathcal{C}_\beta$, there exist proper choices of $\beta_{\varepsilon,\delta}$ , $\eta$ and $B$, such that the revised algorithm is $(\varepsilon,\delta)$-JDP while achieving the following regret bound with probability at least $1- \alpha$
    \begin{align*}
        \mathsf{Err}_{\mathsf{stat}} \lesssim \log \left( \frac{\mathcal{N}(\mathcal{F}, \beta)KH}{\alpha} \right) \quad \mathsf{Err}_{\mathsf{priv}} \lesssim \log \left( \frac{\mathcal{N}(\mathcal{F}, \beta)K}{\alpha} \right)
    \end{align*}
    Consequently,
    \begin{align*}
        \mathsf{Reg} &\lesssim_{\delta} H\sqrt{C_{\mathsf{cov}}K \log \left( \frac{\mathcal{N}(\mathcal{F}, \beta)KH}{\alpha} \right)\log K} \\ 
        &\quad + H(C_{\mathsf{cov}}K \log K)^{3/5}\cdot\left(\frac{\log(\mathcal{N}(\mathcal{F}, \beta)K/\alpha)}{\varepsilon}\right)^{2/5} + \beta K.
    \end{align*}
\end{corollary}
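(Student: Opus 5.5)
The plan is to reduce the corollary to the finite-class analysis behind Theorem~\ref{thm:general}, run on the explicit cover $\cC_\beta$ in place of $\cF$, and to charge the discretization to an additive $\beta K$ term.

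\textbf{Privacy.} The privacy argument carries over verbatim. The cover $\cC_\beta$ is built from $\cF$ alone, before any data are seen, so it is data-independent. The score $S(f)$ restricted to $\cC_\beta$ keeps sensitivity $\Delta_S=O(H)$. The Billboard Lemma, advanced composition over the $M=K/B$ updates, and the choice $\beta_{\varepsilon,\delta}=\varepsilon_0/(2\Delta_S)$ then go through unchanged, and they give $(\varepsilon,\delta)$-JDP.

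\textbf{Error terms.} The exponential mechanism now draws from a finite set of size $|\cC_\beta|=\cN(\cF,\beta)$. Its utility guarantee, with a union bound over the $M\le K$ updates, gives
\[
\mathsf{Err}_{\mathsf{priv}}=O\bigl(\beta_{\varepsilon,\delta}^{-1}\log(\cN(\cF,\beta)K/\alpha)\bigr)
\]
relative to the best element of $\cC_\beta$. The uniform concentration step of~\citet{jin2021bellman}, a Freedman bound with a union over pairs in the class, over $h$ and over $k$, is applied over $\cC_\beta$. This gives $\mathsf{Err}_{\mathsf{stat}}\lesssim H\log(\cN(\cF,\beta)KH/\alpha)$.

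\textbf{Comparator.} The main obstacle is that $\cC_\beta$ need not satisfy realizability or Bellman completeness exactly. Two places use these properties.

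For $\mathsf{T}_1$, I compare against $\tilde Q\in\cC_\beta$ with $\|\tilde Q-Q^\star\|_\infty\le\beta$, rather than against $Q^\star$. Then $\tilde Q_1(s_1)\ge V_1^\star(s_1)-\beta$. Summing over $K$ episodes, the optimism term loses at most $\beta K$.

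The inf term in $L^{\mathsf{BE}}$ and the completeness used in concentration are handled the same way. For any $f\in\cC_\beta$, $\cT_hf_{h+1}\in\cF_h$, so some $g\in\cC_\beta$ lies within $\beta$ of it. This perturbs each squared-loss summand by $O(\beta)$, because all quantities are bounded by a constant. The empirical losses summed over the $k$ data points therefore shift by $O(\beta k H)$, and after division by $\eta$ they contribute $O(\beta K\cdot kH/\eta)$ in total.

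The risk is that this is too large. I would handle it by taking $\beta$ small, for example $\beta\le 1/(KH)$, so the perturbation is absorbed up to constants. The leftover linear term, coming from optimism and from the greedy-policy approximation, is stated as $+\beta K$.

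\textbf{Assembly.} Finally I plug these error terms into the generic decomposition~\eqref{eq:reg_decomp}. The batching-coverability inequality for $\mathsf{T}_2$ needs only the sampled $f^{(k)}$ and the coverability of $\Pi_{\cC_\beta}\subseteq\Pi_{\cF}$, so it is unaffected. Tuning $\eta$ and $B$ exactly as in Theorem~\ref{thm:general} then gives the stated bound with $|\cF|$ replaced by $\cN(\cF,\beta)$, plus $\beta K$.
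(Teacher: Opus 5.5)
\textbf{Verdict: there is a genuine gap.} Your argument only works for $\beta\le 1/(KH)$, whereas the corollary is stated for an arbitrary fixed $\beta>0$.

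\textbf{What is fine.} Your privacy argument is correct. So is the replacement of $|\cF|$ by $\mathcal{N}(\mathcal{F},\beta)$ in $\mathsf{Err}_{\mathsf{priv}}$ and $\mathsf{Err}_{\mathsf{stat}}$, and the observation that the coverability step only needs $\Pi_{\mathcal{C}_\beta}\subseteq\Pi_{\cF}$. This is the reduction the paper intends. Its only justification is a remark: the first two terms are the finite-class bound, and $\beta K$ is the cost of restricting the exponential mechanism to $\mathcal{C}_\beta$, i.e.\ the optimism loss $\tilde Q_1(s_1)\ge V_1^\star(s_1)-\beta$.

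You see more than that remark does. On $\mathcal{C}_\beta$ neither realizability nor completeness holds. So $L^{\mathsf{BE}}_{\cD_{[k]}}(\tilde Q)$ is no longer $O(H\kappa)$. The lower bound on $L^{\mathsf{BE}}_{\cD_{[k]}}(f^{(k)})$ also loses a term, because the infimum over $\mathcal{C}_\beta$ need not contain $\cT_h f^{(k)}_{h+1}$.

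\textbf{Where it fails.} The gap is in how you dispose of this perturbation. The content of the corollary is the trade-off between $\log\mathcal{N}(\mathcal{F},\beta)$ and $\beta K$. With your per-summand $O(\beta)$ bound, the extra cost is $\sum_{k}O(\beta Hk)/\eta=O(\beta HK^2/\eta)$. The tuned $\eta$ is $o(K)$: order $K^{1/2}$ without privacy and $K^{3/5}$ with it. So this term is polynomially larger than $\beta K$.

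Retuning $\eta$ does not rescue it. Balancing $\beta HK^2/\eta$ against $\eta C$ gives $K\sqrt{\beta HC}$, which scales like $\sqrt{\beta}K$ rather than $\beta K$. Imposing $\beta\le 1/(KH)$ makes the $+\beta K$ term at most $1/H$. What you actually prove is therefore the finite-class theorem run on a $1/(KH)$-cover, not the stated bound.

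Separately, there is no ``greedy-policy approximation'' term. The performance-difference identity is exact for every $f^{(k)}$ and its greedy policy.

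\textbf{The missing idea.} Misspecification enters the excess empirical squared loss only quadratically, plus a mean-zero fluctuation. Let $y_t=r_h+\max_{a'}f_{h+1}(s_{h+1},a')$ and $\|g_h-\cT_h f_{h+1}\|_\infty\le\beta$. Then
$\cE_{\cD,h}(g_h,f_{h+1})-\cE_{\cD,h}(\cT_h f_{h+1},f_{h+1})=\sum_t(g_h-\cT_h f_{h+1})^2+2\sum_t(g_h-\cT_h f_{h+1})(\cT_h f_{h+1}-y_t)$.
Since $\mathbb{E}[y_t\mid s_h,a_h]=[\cT_h f_{h+1}](s_h,a_h)$, the cross term is a martingale with $O(\beta)$ increments. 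Freedman's inequality with a union bound over $\mathcal{C}_\beta$ then gives $O(\beta^2k+\beta\sqrt{k\kappa})$, with $\kappa\asymp\log(\mathcal{N}(\mathcal{F},\beta)KH/\alpha)$. The same bound holds for $L^{\mathsf{BE}}(\tilde Q)$, using $\|\tilde Q_h-\cT_h\tilde Q_{h+1}\|_\infty\le 2\beta$.

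The extra regret is then $O(H(K\kappa+\beta^2K^2)/\eta)$. The theorem's tuning absorbs this whenever $\beta\lesssim\sqrt{\kappa/K}$. For larger $\beta$, enlarging $\eta$ gives an additive $O(\beta K\sqrt{HC})=O(\beta KH\sqrt{C_{\mathsf{cov}}\log K})$, with $C$ as in the proof of Theorem~\ref{thm:general}.

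So the bare $+\beta K$ is justified only in the regime where it is already dominated by the first term of the bound. In general this analysis gives an approximation cost larger by a factor $H\sqrt{C_{\mathsf{cov}}\log K}$. Your fix does not capture this, and the paper's remark does not discuss it.
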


\begin{remark}
    The first two terms are exactly the finite-class bound with $|\mathcal{F}|$ replaced by $\mathcal{N}(\mathcal{F}, \beta)$, and $\beta K$ is the additional approximation error over $K$ episodes from restricting the Exponential Mechanism to sample exclusively from $\mathcal{C}_\beta$.
\end{remark}

%% file: sections/app-gaps.tex
\section{Gaps in Private RL with Linear MDPs} \label{app:gaps}

In this section, we further clarify the gaps in prior work.

As discussed in Section~\ref{sec:dis}, because rare policy switching is data-dependent, applying the standard determinant-trick argument requires privatizing the design matrix. This introduces Gaussian privacy noise, which is not a non-negative rank-one update. Consequently, the monotonicity property need not hold, making the follow-up arguments ungrounded.

Monotonicity of the design matrix is required for two claims: (a) \textit{log-order switches} (Lemma 31 in~\citet{ngo2022improved}, key for privacy accounting), and (b) \textit{norm factor bound} (Eq. 20 in~\citet{ngo2022improved}), namely that the norms induced by two design matrices are within a constant factor of each other.

For (a), we dive into Lemma 31 in~\citet{ngo2022improved}, where the determinant doubling trick is adopted. In particular, without monotonicity, the following step in their proof would fail:
\begin{align*}
    \frac{\prod_{h=1}^H \det(\tilde{\Lambda}_h^{k_i})}{\prod_{h=1}^H \det(\tilde{\Lambda}_h^{k_{i-1}})} = \frac{\det(\tilde{\Lambda}_{h^\star}^{k_i})}{\det(\tilde{\Lambda}_{h^\star}^{k_{i-1}})} \times \prod_{h \neq h^*} \frac{\det(\tilde{\Lambda}_h^{k_i})}{\det(\tilde{\Lambda}_h^{k_{i-1}})} > 2
\end{align*}

To see this, note that the first term is greater than 2 by definition of the switching design (triggered by a particular $h^{\star} \in [H]$, see line 6 of Algorithm 1 in~\citet{ngo2022improved}), but the second term may be less than 1 if the design matrix is not monotone. This indeed can happen because their Algorithm 1 will add fresh privacy noise (via Binary tree mechanism) to each design matrix at each $h \in [H]$ (see line 20 of Algorithm 1 in~\citet{ngo2022improved}), which can make the private design matrix non-monotone, even though it is still PSD.

For (b), the norm factor bound (Eq. 20 in~\citet{ngo2022improved}) fails because the standard determinant trick (Lemma 12 in~\citet{abbasi2011improved}) strictly requires monotonicity. Their analysis relies on the following inequality holding for the current private matrix $\tilde{\mathbf{\Lambda}}_h^k$ and the last-updated matrix $\tilde{\mathbf{\Lambda}}_h^{\tilde{k}}$:
\begin{align*}
    (\phi_h^k)^\top \left( \tilde{\mathbf{\Lambda}}_h^{\tilde{k}} \right)^{-1} \phi_h^k \leq 2(\phi_h^k)^\top \left( \tilde{\mathbf{\Lambda}}_h^k \right)^{-1} \phi_h^k
\end{align*}

However, as noted in (a), Algorithm 1 (line 20) in~\citet{ngo2022improved} injects fluctuating noise via the Binary tree mechanism at every episode. This inherently breaks monotonicity ($\tilde{\mathbf{\Lambda}}_h^k \nsucceq \tilde{\mathbf{\Lambda}}_h^{\tilde{k}}$), meaning $\tilde{\mathbf{\Lambda}}_h^k$ can shrink along certain feature directions. Without monotonicity, the inverse matrix relationship breaks down. Instead of being bounded by a constant factor of $2$, the ratio can blow up exponentially in dimension $d$, fundamentally invalidating the claimed $\tilde{O}(\sqrt{K})$ regret bound. 

%% file: sections/app-experiments.tex
\section{Experiments}
\label{app:experiment}

While our main focus is to provide theoretical understandings of DP-RL, rather than extensive empirical studies, we conducted a small \textbf{proof-of-concept} simulation in a deterministic outcome-reward setting designed to mirror the structure of Algorithm~\ref{alg:deterministic} while remaining simple enough to implement exactly. All experiments were run on a standard CPU machine and took less than a few minutes; no GPU was used.

\textbf{Environment construction.} We consider a finite-horizon deterministic episodic MDP with horizon $H=4$, setting a binary action space $\mathcal{A}=\{0,1\}$. At the beginning of each episode, a binary context
\begin{align*}
    x=(x_1,\dots,x_6)\in\{0,1\}^6
\end{align*}
is sampled uniformly at random. Thus, the state at stage $h$ is $s_h=(x,a_1,\dots, a_{h-1},h)$, so the transition is deterministic, since the next state is obtained by appending the chosen action.

There is no intermediate reward; the environment provides only a terminal outcome reward $r\in\{0,1\}$ at the end of the episode, which is determined by a hidden target hypothesis. 

An episode receives a reward of $1$ if the full action sequence matches the target sequence prescribed by the hidden hypothesis for the realized context, and $0$ otherwise. Thus, this is a deterministic outcome-reward MDP.

\textbf{Finite nonlinear hypothesis class.} We use a finite nonlinear hypothesis class with $|\cF|=243$. Each hypothesis consists of a binary gate $g(x)\in\{0,1\}$, chosen from
\begin{align*}
    g_0(x)&=1,\\
    g_1(x)&=x_5\oplus x_6,\\
    g_2(x)&=x_1\oplus x_3\oplus x_5,
\end{align*}
and one \textbf{stagewise action rule} for each stage $h=1,2,3,4$, chosen from
\begin{align*}
    u^{(0)}(x,\text{prefix})&=x_1\oplus x_2,\\
    u^{(1)}(x,\text{prefix})&=x_3\oplus a_{h-1}\quad (a_0:=0),\\
    u^{(2)}(x,\text{prefix})&=\mathrm{parity}(a_1,\dots,a_{h-1})\oplus x_4.
\end{align*}
Hence, each hypothesis defines a nonlinear, context-dependent, and history-dependent target action sequence.

\textbf{Easy and hard instances.} We report results on two instances. In the \textbf{easy environment}, the hidden target gate is $g_0(x)=1$, and the stage rules are $(u^{(0)},u^{(1)},u^{(0)},u^{(1)})$. In this case, every context is rewardable, so positive outcomes are relatively dense.

In the \textbf{hard environment}, the hidden target gate is $g_2(x)=x_1\oplus x_3\oplus x_5$, and the stage rules are $(u^{(2)},u^{(1)},u^{(2)},u^{(1)})$. In this case, only a subset of contexts are rewardable, and the correct sequence depends more strongly on action-history parity, making learning more difficult.

\textbf{Methods compared.} We compare four methods: 
\begin{itemize}
    \item \texttt{non\_private\_no\_batch}, a non-private method updated after every episode; 
    \item \texttt{non\_private\_batched}, a non-private method restricted to batched updates; 
    \item \texttt{private\_eps\_8}, a private batched selection method with privacy/noise parameter $\varepsilon=8$;
    \item \texttt{private\_eps\_5}, a private batched selection method with privacy/noise parameter $\varepsilon=5$. 
\end{itemize}

For the private methods, we use the same batched update structure as in the proof-of-concept implementation and replace exact maximization with exponential-mechanism-style sampling over the finite hypothesis class. The batch size is selected according to the theory.

\textbf{Reporting metric: plateau episode.}
We summarize each regret curve by its \textbf{plateau episode}. We define the plateau episode as the first episode at which \textbf{95\% of the final cumulative regret} has already been incurred. This gives a robust scalar summary of how quickly the regret curve enters its flat regime.

\textbf{Plateau results.}
The resulting plateau episodes are reported in Table~\ref{tab:poc-plateau}, and Figure~\ref{fig:poc-regret} plots the cumulative regret curves for the easy and hard proof-of-concept environments. The curves show the same qualitative pattern as the plateau summary: non-private learning reaches the stable regime fastest, batching introduces a moderate delay, and stronger privacy noise leads to slower stabilization.

\begin{table}[!ht]
    \centering
    \begin{tabular}{ccccc}
        \toprule
        Environment & non-private no-batch & non-private batched & private $\varepsilon=8$ & private $\varepsilon=5$ \\
        \midrule
        Easy & 13 & 24 & 37 & 127 \\
        Hard & 31 & 34 & 87 & 218 \\
        \bottomrule
    \end{tabular}
    \vspace{1em}
    \caption{Plateau episode for each method and environment.}
    \label{tab:poc-plateau}
\end{table}

\begin{figure}[!ht]
    \centering
    \begin{minipage}{0.47\textwidth}
        \centering
        \includegraphics[width=\linewidth]{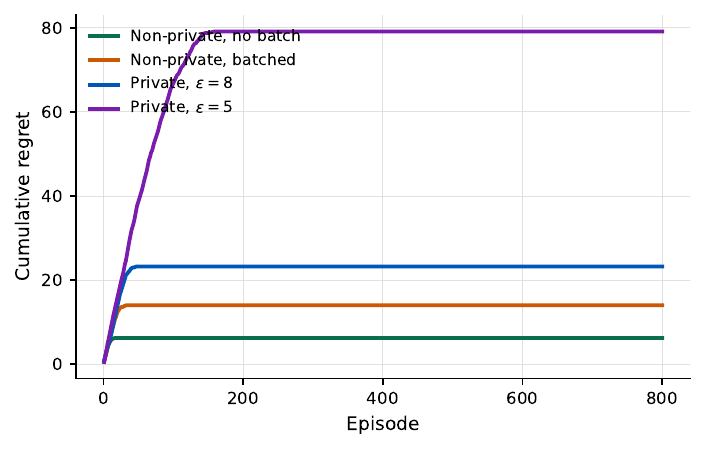}\\[-0.5em]
        {\small (a) Easy environment}
    \end{minipage}
    \hfill
    \begin{minipage}{0.47\textwidth}
        \centering
        \includegraphics[width=\linewidth]{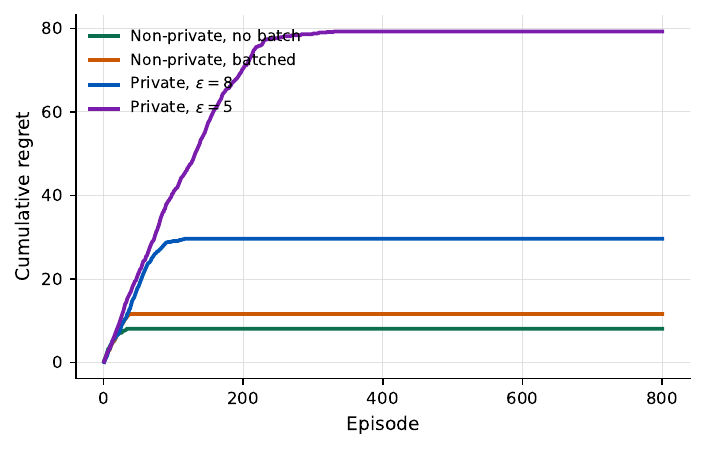}\\[-0.5em]
        {\small (b) Hard environment}
    \end{minipage}

    \caption{Cumulative regret curves in the proof-of-concept deterministic outcome-reward environments.}
    \label{fig:poc-regret}
\end{figure}

\textbf{Interpretation.} These results show the expected trend clearly. In both environments, the non-private methods plateau fastest. Among the private methods, the larger privacy parameter, $\varepsilon=8$, reaches the plateau regime substantially earlier than $\varepsilon=5$. The hard environment requires more episodes than the easy environment across all methods. In particular, in the easy environment, the plateau time increases from 37 episodes at $\varepsilon=8$ to 127 episodes at $\varepsilon=5$. In the hard environment, it increases from 87 episodes at $\varepsilon=8$ to 218 episodes at $\varepsilon=5$.

Thus, even in this minimal proof-of-concept simulation, weaker privacy constraints lead to faster convergence, while batching introduces a visible but smaller additional slowdown compared with the non-batched non-private baseline.

\textbf{Summary.}
This experiment is intentionally small-scale, but it directly validates the qualitative privacy-utility tradeoff emphasized in the paper: as privacy becomes stronger, more episodes are required to reach the stable learning regime. It also shows that the deterministic outcome-reward setting with general nonlinear finite function approximation already exhibits the expected separation among non-private, batched non-private, and private learning.

%% file: sections/app-proof.tex
\section{Proof of Section~\ref{sec:alg}}

\subsection{Proof of Theorem~\ref{thm:general}}
\label{app:general}
\input{sections/app-proof-general}

\subsection{Proof of Theorem~\ref{thm:deter}}
\label{app:deter}
\input{sections/app-proof-det}

%% file: sections/app-proof-general.tex
\begin{proof}

Denote $V^{\star}(s_1) = Q_1^\star(s_1) := \max_{a \in \cA} Q^\star(s_1, a)$, by the Exponential Mechanism, with probability at least $1-\alpha$, we have $S(f^{(k)}) \ge \max_{f \in \cF} S(f) - \frac{2\Delta_S}{\varepsilon_0} \log(|\cF| K /\alpha)$, which implies
\begin{align*}
    \eta f_1^{(k)}(s_1) - L_{\cD_{[k]}}^{\mathsf{BE}}(f^{(k)}) 
    \ge
    \eta Q_1^{\star}(s_1) - L_{\cD_{[k]}}^{\mathsf{BE}}(Q^{\star}) - \frac{2\Delta_S}{\varepsilon_0} \log(|\cF| K /\alpha).
\end{align*}

Dividing both sides by $\eta$ and rearranging terms yields
\begin{align*}
    V^{\star}(s_1) \le f_1^{(k)}(s_1) - \frac{1}{\eta}\left( L_{\cD_{[k]}}^{\mathsf{BE}}(f^{(k)}) - L_{\cD_{[k]}}^{\mathsf{BE}}(Q^{\star}) \right) + \frac{2\Delta_S}{\eta \varepsilon_0} \log(|\cF| K /\alpha).
\end{align*}

By subtracting $V^{\pi_k}(s_1)$ from both sides, we obtain
\begin{align*}
    V^{\star}(s_1) - V^{\pi_k}(s_1)
    \le 
    f_1^{(k)}(s_1) - V^{\pi_k}(s_1) - \frac{1}{\eta}\left( L_{\cD_{[k]}}^{\mathsf{BE}}(f^{(k)}) - L_{\cD_{[k]}}^{\mathsf{BE}}(Q^{\star}) \right) + \frac{2\Delta_S}{\eta \varepsilon_0} \log(|\cF| K /\alpha).
\end{align*}

For the term $f_1^{(k)}(s_1) - V^{\pi_k}(s_1)$, given by the standard performance difference lemma~\cite{Kakade2002ApproximatelyOA}, we have
\begin{align*}
    f_1^{(k)}(s_1) - V^{\pi_k}(s_1) 
    = \sum_{h=1}^{H} \mathbb{E}_{\pi_k} \left[f_h^{(k)}(s_h, a_h) - [\cT_h f_{h+1}^{(k)}](s_h, a_h) \right].
\end{align*}

Thus, denote $e_h^{(k)}(s,a) = f_h^{(k)}(s,a) - [\cT_h f_{h+1}^{(k)}](s, a)$, it holds that for $k \in [K]$,
\begin{align*}
    V^\star(s_1) - V^{\pi_{k}}(s_1) 
    &\le 
    \sum_{h=1}^H \mathbb{E}_{\pi_{k}} [e_h^{(k)}(s_h, a_h)] - \frac{L_{\cD_{[k]}}^{\mathsf{BE}}(f^{(k)}) - L_{\cD_{[k]}}^{\mathsf{BE}}(Q^\star)}{\eta} + \frac{2\Delta_S}{\eta \varepsilon_0} \log(|\cF| K /\alpha).
\end{align*}

Then, under the success event of Proposition 11 in~\citet{chen2025outcomebasedonlinereinforcementlearning}, we have
\begin{align*}
    V^\star(s_1) - V^{\pi_{k}}(s_1) - \frac{2H\kappa}{\eta}
    \le 
    \sum_{h=1}^H \left( \mathbb{E}_{\pi_{k}} [e_h^{(k)}(s_h, a_h)] - \frac{1}{2\eta} \sum_{t < [k]} \mathbb{E}_{\pi_{t}} e_h^{(k)}(s_h, a_h)^2 \right) + \frac{2\Delta_S}{\eta \varepsilon_0} \log(|\cF| K /\alpha),
\end{align*}
where $\kappa = \log \frac{\abs{\cF}}{\alpha} + \log \frac{H}{\alpha}$.

\begin{lemma} \label{lem:general_bound_original}

Suppose that $p_1,\ldots,p_K$ is a sequence of distributions over $\cX$ and there exists $\mu \in \Delta(\cX)$ such that $\frac{p_k(x)}{\mu(x)} \le C_{\mathsf{cov}}$ for all  $x \in \cX$ and $k \in [K]$. Then, for any sequence $f^{(1)},\ldots,f^{(K)}$ of functions $\cX\to[0,1]$ and any constant $\lambda \ge 1$, it holds that
\begin{align*}
    \sum_{k=1}^K \mathbb{E}_{x\sim p_k}\, f^{(k)}(x)
    \le
    C_{\mathsf{cov}}B \log_2 \left( 1 + \frac{C_{\mathsf{cov}}K}{\lambda} \right) + \sqrt{ 4C_{\mathsf{cov}}\log \left(1+\frac{C_{\mathsf{cov}}K}{\lambda}\right) \left[ 2K\lambda +\sum_{k=1}^K \sum_{t<[k]} \mathbb{E}_{x\sim p_t}\, f^{(k)}(x)^2 \right]}.
\end{align*}
\end{lemma}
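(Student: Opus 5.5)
We adapt the burn-in/potential argument of \citet{xie2022rolecoverageonlinereinforcement} to the batched schedule. Throughout, write $[k]$ for the first episode of the batch containing $k$. Define the lagged cumulative occupancy $\tilde d_{[k]}(x) := \sum_{t<[k]} p_t(x)$ and the full cumulative occupancy $\tilde d_k(x) := \sum_{t<k} p_t(x)$.

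The only new difficulty compared with the unbatched proof is that the training data visible to $f^{(k)}$ is $\tilde d_{[k]}$, not $\tilde d_k$. These two can differ by up to $BC_{\mathsf{cov}}\mu(x)$, because each $p_t(x)\le C_{\mathsf{cov}}\mu(x)$. The plan is to isolate, for each $x$, the batches in which this gap is large relative to the accumulated mass, and pay for them crudely. These will produce the first term $C_{\mathsf{cov}}B\log_2(1+C_{\mathsf{cov}}K/\lambda)$.

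\textbf{Step 1: splitting into good and bad batches.} For a fixed point $x$, call episode $k$ (equivalently, its batch) \emph{bad} for $x$ if
\[
\tilde d_{[k]+B}(x) + \lambda\mu(x) > 2\bigl(\tilde d_{[k]}(x)+\lambda\mu(x)\bigr),
\]
that is, if the shifted potential $\tilde d+\lambda\mu$ more than doubles across the batch. Call $k$ \emph{good} for $x$ otherwise. On good episodes we have $\tilde d_{k+1}(x)+\lambda\mu(x)\le 2(\tilde d_{[k]}(x)+\lambda\mu(x))$.

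We then split
\[
\mathbb{E}_{p_k}f^{(k)} = \sum_x p_k(x)f^{(k)}(x)\mathbb{1}\{\text{bad}\} + \sum_x p_k(x)f^{(k)}(x)\mathbb{1}\{\text{good}\}.
\]

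\textbf{Step 2: the bad part.} Use $f^{(k)}\le 1$. For fixed $x$, the potential $\tilde d(x)+\lambda\mu(x)$ starts at $\lambda\mu(x)$, is nondecreasing, and never exceeds $(C_{\mathsf{cov}}K+\lambda)\mu(x)$. It doubles on every bad batch, so there are at most $\log_2(1+C_{\mathsf{cov}}K/\lambda)$ bad batches for $x$. Each bad batch contributes at most $\sum_{t\in\text{batch}}p_t(x)\le BC_{\mathsf{cov}}\mu(x)$. Summing over $x$ and using $\sum_x\mu(x)=1$ gives the first term of the lemma.

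\textbf{Step 3: the good part.} Apply Cauchy--Schwarz with weights $\tilde d_{[k]}+\lambda\mu$:
\[
\sum_k\sum_x p_k f^{(k)}\mathbb{1}\{\text{good}\}
\le \sqrt{\sum_{k,x}\frac{p_k(x)^2\,\mathbb{1}\{\text{good}\}}{\tilde d_{[k]}(x)+\lambda\mu(x)}}\;\sqrt{\sum_{k,x}\bigl(\tilde d_{[k]}(x)+\lambda\mu(x)\bigr)f^{(k)}(x)^2}.
\]
The second factor equals $\sum_k\sum_{t<[k]}\mathbb{E}_{p_t}f^{(k)2} + \lambda\sum_k\mathbb{E}_\mu f^{(k)2}$, which is at most that double sum plus $K\lambda$.

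For the first factor, use $p_k(x)\le C_{\mathsf{cov}}\mu(x)$ and the good-batch relation to replace the denominator by $\tfrac12(\tilde d_{k+1}(x)+\lambda\mu(x))$. This bounds the first factor by
\[
2C_{\mathsf{cov}}\sum_x\mu(x)\sum_k \frac{p_k(x)}{\tilde d_{k+1}(x)+\lambda\mu(x)}.
\]
The inner sum is a telescoping-type potential. Since each increment $p_k(x)$ is at most $C_{\mathsf{cov}}\mu(x)$, and this is at most $C_{\mathsf{cov}}$ times the base value $\lambda\mu(x)$, the standard $u/(a+u)\le\log(1+u/a)$ argument applies. It bounds the inner sum by $\log\bigl((C_{\mathsf{cov}}K+\lambda)/\lambda\bigr)$, up to a constant absorbed using $\lambda\ge1$. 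Combining the two factors gives the square-root term with the constants $4$ and $2K\lambda$, where the slack is used to absorb the factor $2$ from the good-batch replacement.

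\textbf{Main obstacle.} The delicate step is Step 3's handoff between the lagged denominator $\tilde d_{[k]}$ and the current one $\tilde d_{k+1}$. It has to be set up so that both (i) the good-batch condition makes the two comparable within a factor $2$, and (ii) the bad-batch count is genuinely logarithmic. That is why the doubling is defined on the shifted potential $\tilde d+\lambda\mu$ rather than on $\tilde d$ alone: it gives both a nonzero starting point and a bounded number of doublings. If the constants do not close exactly, I would first try enlarging the $\lambda$ multiplier, which only affects the $2K\lambda$ term.
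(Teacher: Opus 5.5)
Your proposal is correct and takes essentially the same route as the paper. It uses the same doubling criterion on the shifted potential $\lambda\mu(x)+\sum_{t}p_t(x)$ to define bad batches, giving at most $\log_2(1+C_{\mathsf{cov}}K/\lambda)$ of them per $x$, each costing $BC_{\mathsf{cov}}\mu(x)$. It then applies Cauchy--Schwarz with the lagged weights on good batches, uses the factor-$2$ swap to the current potential, and finishes with the telescoping log-potential. Your constants also close without hedging: $u/(a+u)\le\log(1+u/a)$ holds for all $a,u>0$, so the first factor is at most $2C_{\mathsf{cov}}\log(1+C_{\mathsf{cov}}K/\lambda)$ without needing $\lambda\ge 1$, which already gives a bound tighter than the stated one.
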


Consider $\cX = \cS \times \cA$, and we define $p_{k} := \mathbb{P}^{\pi_{k}} \left( (s,a) = \cdot \right) \in \left(\cS \times \cA \rightarrow [0, 1] \right)$, assume that $\frac{p_{k}(s,a)}{\mu(s,a)} \le C_{\mathsf{cov}}$, then simply apply the Lemma~\ref{lem:general_bound_original}, we derive the following bound
\begin{align*}
        \sum_{k=1}^{K} \mathbb{E}_{\pi_{k}}  \abs{e_h^{(k)}(s_h, a_h)}
        \le
        C_{\mathsf{cov}}B\log_2\left( 1 + \frac{C_{\mathsf{cov}}K}{\lambda} \right) + \sqrt{4 C_{\mathsf{cov}} \log \left(1 + \frac{C_{\mathsf{cov}}K}{\lambda}\right)\Big[ 2K\lambda + \sum_{k=1}^{K}\sum_{t < [k]} \mathbb{E}_{\pi_{t}} e_h^{(k)}(s_h, a_h)^2\Big] }.
\end{align*}

Using Cauchy-Schwarz in the form $xy \le \frac{\eta}{2}x^2 + \frac{1}{2\eta} y^2$, we have
\begin{align*}
    \sum_{k=1}^K \mathbb{E}_{\pi_{k}} \abs{e_h^{(k)}(s_h, a_h)} 
    &\le C_{\mathsf{cov}} B \log_2 \left( 1 + \frac{C_{\mathsf{cov}} K}{\lambda} \right) \\ 
    &\quad + 2\eta C_{\mathsf{cov}} \log \left( 1 + \frac{C_{\mathsf{cov}} K}{\lambda} \right) + \frac{1}{2\eta} \left[ 2K \lambda + \sum_{k=1}^K \sum_{t<[k]} \mathbb{E}_{\pi_{t}} e_h^{(k)}(s_h, a_h)^2 \right].
\end{align*}

Therefore, combining the inequalities above, set $ \kappa = \lambda$ and take summation over $k \in [K]$ gives
\begin{align*}
    \sum_{k=1}^K \left( V^\star(s_1) - V^{\pi_{k}}(s_1) \right) 
    &\le \frac{3KH\kappa}{\eta} + \frac{2K\Delta_S}{\eta \varepsilon_0} \log(|\cF| K /\alpha) \\
    &\quad + 2\eta H C_{\mathsf{cov}} \log \left( 1 + \frac{C_{\mathsf{cov}}K}{\kappa} \right) + H C_{\mathsf{cov}} B \log_2 \left( 1 + \frac{C_{\mathsf{cov}} K}{\kappa} \right) .
\end{align*}

For the non-private case, we set
\begin{align*}
    \frac{3KH\kappa}{\eta} = C \eta \implies \eta = \sqrt{\frac{3KH\kappa}{C}}
\end{align*}
where $C = H C_{\mathsf{cov}} \log \left( 1 + \frac{C_{\mathsf{cov}}K}{\kappa} \right)$.
Substituting back, the non-private bound is $\frac{3KH\kappa}{\eta} + C \eta = 2\sqrt{3KH\kappa C}$.

For the private case, we write
\begin{align*}
    \frac{2K\Delta_S \log(|\cF| K /\alpha)}{\eta \varepsilon_0} = C \cdot \eta.
\end{align*}

\begin{lemma}[Advanced Sequential Composition~\cite{dwork2010boosting}] \label{lem:dp-advance}
Suppose $D = \cup_{i=1}^k D_i$ and each $\cM_i$ is $\varepsilon$-DP, then $M$ is $(\varepsilon', \delta')$-DP for any $\delta' \geq 0$ with
\begin{align*}
    \varepsilon' = \varepsilon \sqrt{2k \ln(1/\delta')} + k\varepsilon(e^\varepsilon - 1)
\end{align*}
\end{lemma}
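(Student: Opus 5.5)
We prove the lemma with the standard privacy-loss-random-variable argument of~\citet{dwork2010boosting}. We allow $\cM_i$ to depend adaptively on the outputs $y_{<i}$ of $\cM_1,\ldots,\cM_{i-1}$, which is exactly how the batch updates in Algorithm~\ref{alg:general} are composed. Since $\cF$ is finite, all output spaces are discrete, so we work with probability mass functions throughout.

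\textbf{Step 1: the loss variable and a per-step expectation bound.} Fix neighboring datasets $D,D'$. For an output sequence $y=(y_1,\ldots,y_k)$ define the privacy loss
\begin{align*}
    L(y)=\sum_{i=1}^k \ell_i(y_i\mid y_{<i}), \qquad \ell_i(y_i\mid y_{<i}) := \ln\frac{\mathbb{P}[\cM_i(D)=y_i\mid y_{<i}]}{\mathbb{P}[\cM_i(D')=y_i\mid y_{<i}]}.
\end{align*}
Because each $\cM_i$ is $\varepsilon$-DP conditionally on every prefix $y_{<i}$, we have $|\ell_i|\le\varepsilon$ pointwise.

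The main technical point is the conditional mean bound $\mathbb{E}_{y_i\sim \cM_i(D)}[\ell_i\mid y_{<i}]\le \varepsilon(e^\varepsilon-1)$. Write $p,q$ for the two conditional laws. Then
\begin{align*}
    \mathrm{KL}(p\|q)\le \mathrm{KL}(p\|q)+\mathrm{KL}(q\|p)=\sum_y (p(y)-q(y))\ln\frac{p(y)}{q(y)}\le \varepsilon\sum_y|p(y)-q(y)|=2\varepsilon\,\mathrm{TV}(p,q).
\end{align*}
It remains to show $\mathrm{TV}(p,q)\le \frac{e^\varepsilon-1}{e^\varepsilon+1}\le\frac{e^\varepsilon-1}{2}$. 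Let $A=\{p>q\}$. Pure DP gives $p(A)\le e^\varepsilon q(A)$, and applied to the complement it gives $1-q(A)\le e^\varepsilon(1-p(A))$. Maximizing $p(A)-q(A)$ subject to these two constraints yields the claimed bound. Hence the conditional KL is at most $\varepsilon(e^\varepsilon-1)$.

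\textbf{Step 2: concentration of the total loss.} Draw $y\sim\cM(D)$. The centered sequence
\begin{align*}
    Z_i=\ell_i-\mathbb{E}[\ell_i\mid y_{<i}]
\end{align*}
is a martingale difference sequence, and each $Z_i$ lies in an interval of length at most $2\varepsilon$. By Azuma--Hoeffding,
\begin{align*}
    \mathbb{P}_{y\sim\cM(D)}\left[L(y) > k\varepsilon(e^\varepsilon-1)+\varepsilon\sqrt{2k\ln(1/\delta')}\right]\le\delta'.
\end{align*}
The threshold in this event is exactly $\varepsilon'$.

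\textbf{Step 3: from the tail bound to $(\varepsilon',\delta')$-DP.} Let $G=\{y: L(y)\le\varepsilon'\}$. For any event $E$,
\begin{align*}
    \mathbb{P}[\cM(D)\in E]\le \mathbb{P}[\cM(D)\in E\cap G]+\mathbb{P}[\cM(D)\notin G].
\end{align*}
The product of conditional probabilities gives $\mathbb{P}[\cM(D)=y]=e^{L(y)}\mathbb{P}[\cM(D')=y]$, so the first term is at most $e^{\varepsilon'}\mathbb{P}[\cM(D')\in E]$. The second term is at most $\delta'$ by Step 2. This proves the claim.

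We expect Step 1, specifically the sharp constant $\varepsilon(e^\varepsilon-1)$ rather than the cruder $2\varepsilon(e^\varepsilon-1)$, to be the only delicate part. The total-variation bound via the complementary-event constraint is what delivers the sharp constant; everything else is routine.
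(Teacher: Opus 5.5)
Your proof is correct and follows the standard Dwork--Rothblum--Vadhan argument: privacy-loss variable, a per-step KL bound of $\varepsilon(e^\varepsilon-1)$, Azuma--Hoeffding on the centered losses, then splitting on the good set. The paper gives no proof of its own; it simply invokes this result by citing \citet{dwork2010boosting}. The only difference from that source is your TV detour $\mathrm{TV}(p,q)\le\frac{e^\varepsilon-1}{e^\varepsilon+1}$, which is valid but not what ``delivers'' the sharp constant, since $|p-q|\le(e^\varepsilon-1)\min(p,q)$ with $\sum_y\min(p,q)\le 1$ gives $\varepsilon(e^\varepsilon-1)$ directly; also, restricting to discrete outputs is harmless here because the composed outputs $f^{(k)}$ lie in the finite class $\cF$.
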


For the privacy parameter $\varepsilon_0$, utilizing the \emph{Advanced Sequential Composition} (Lemma~\ref{lem:dp-advance}), we have:
\begin{align*}
    \varepsilon = \sqrt{2\frac{K}{B} \log (1/\delta)} \cdot \varepsilon_0 \implies \varepsilon_0 = \frac{\varepsilon}{\sqrt{2\frac{K}{B} \log (1/\delta)}},
\end{align*}
note that our Exponential Mechanism always provides pure-DP ($\varepsilon_0, 0$)-DP, the $\delta$ here is given by Advanced Sequential Composition.

Substituting $\varepsilon_0$ back into the expression for $\eta$, we obtain
\begin{align*}
    \eta
    = \sqrt{\frac{2K \Delta_S \log(|\cF| K /\alpha)}{C \varepsilon_0}}
    = \sqrt{\frac{2K \Delta_S \log(|\cF| K /\alpha)}{C \cdot \frac{\varepsilon \sqrt{B}}{\sqrt{2K \log(1/\delta)}}}}
    = O\left(\frac{K^{3/4}\Delta_S^{1/2}(\log(|\cF| K /\alpha))^{1/2} (\log (1/\delta))^{1/4}}{B^{1/4} \varepsilon^{1/2} C^{1/2}}\right).
\end{align*}

Finally, substituting back to the regret bound, we balance these two terms:
\begin{align*}
    O\left(\frac{K^{3/4}C^{1/2}\Delta_S^{1/2}(\log(|\cF| K /\alpha))^{1/2} (\log (1/\delta))^{1/4}}{B^{1/4} \varepsilon^{1/2}}\right) 
    = 
    O(B \cdot C),
\end{align*}

Thus we solve for $B = \frac{K^{3/5}\Delta_S^{2/5}(\log(|\cF| K /\alpha))^{2/5} (\log (1/\delta))^{1/5}}{\varepsilon^{2/5} C^{2/5}}$.

We now turn to bound the sensitivity $\Delta_S$, note that we have $ S(f) := \eta f_1(s_1) - L_{\cD}^{\mathsf{BE}}(f)$. Since the first term $\eta f_1(s_1)$ does not depend on $\cD$, so the sensitivity is entirely from $L_{\cD}^{\mathsf{BE}}$.

Recall the definition of $L_{\cD}^{\mathsf{BE}}$:
\begin{align*}
    L_{\cD}^{\mathsf{BE}}(f):= \sum_{h =1}^H \cE_{\cD,h}(f_h, f_{h+1}) - \inf_{f' \in \cF} \sum_{h=1}^H \cE_{\cD,h}(f'_h, f_{h+1}),
\end{align*}
where Bellman error $\cE_{\cD,h}(f_h, f_{h+1})$ at each $h$ is given by
\begin{align*}
    \sum_{\tau \in \cD} \left(f_h(s_h, a_h) - r_h - \max_{a'} f_{h+1}(s_{h+1},a')\right)^2.
\end{align*}

Since the rewards are normalized ($\sum_{h=1}^H r_h  \in [0,1]$), then
\begin{align*}
    \left(f_h(s_h, a_h) - r_h - \max_{a'} f_{h+1}(s_{h+1},a')\right) \in [-2,1]
\end{align*}

thus changing one trajectory changes $\sum_{h =1}^H \cE_{\cD,h}(f_h, f_{h+1})$ by at most $4H$. For the second term, since for a fixed $f'$, the sensitivity has the same bound of $4H$, and the $\inf_{f'}$ cannot increase the sensitivity. Therefore, $\Delta_S \le 8H$.

We now introduce the so-called \emph{billboard lemma}.
\begin{lemma}[Billboard Lemma~\cite{hsu2016}] \label{lem:billboard}
    Suppose $\cM: \cD \rightarrow \cR$ is $(\varepsilon, \delta)$-differentially private. Consider any set of functions $f_i : \cD_i \times \cR \rightarrow \cR'$, where $\cD_i$ is the portion of the database containing $i$'s data. The composition \{$f_i(\Pi_i \cD, \cM(\cD))$ \} is $\varepsilon, \delta$-joint differentially private, where $\Pi_i : \cD \rightarrow \cD_i$ is the projection to i's data.
\end{lemma}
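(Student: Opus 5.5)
The plan is to reduce joint differential privacy to ordinary post-processing of the $(\varepsilon,\delta)$-DP mechanism $\cM$. The key observation is that, once user $i$'s own output is removed, what remains depends on the database only through two things: the shared output $\cM(\cD)$, and the private portions $\Pi_j\cD$ of the other users $j\neq i$. Neither of the latter changes when only user $i$'s data changes.

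First, fix a user $i$ and two databases $\cD,\cD'$ that differ only in $i$'s data. Then $\Pi_j\cD=\Pi_j\cD'$ for every $j\neq i$. Define the map
\[
g_{i,\cD}(r) := \big(f_j(\Pi_j\cD, r)\big)_{j\neq i}.
\]
By the observation above, $g_{i,\cD}=g_{i,\cD'}=:g$ is a single fixed function of $r\in\cR$. The joint output excluding user $i$ is then exactly $g(\cM(\cD))$ on $\cD$ and $g(\cM(\cD'))$ on $\cD'$.

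Second, for any measurable event $E$ in the product output space, set $A:=g^{-1}(E)\subseteq\cR$. Then
\[
\mathbb{P}\big(g(\cM(\cD))\in E\big)=\mathbb{P}\big(\cM(\cD)\in A\big)\le e^{\varepsilon}\,\mathbb{P}\big(\cM(\cD')\in A\big)+\delta=e^{\varepsilon}\,\mathbb{P}\big(g(\cM(\cD'))\in E\big)+\delta,
\]
where the inequality is the DP guarantee of $\cM$. This holds for every $i$ and every such neighboring pair, which is precisely the $(\varepsilon,\delta)$-JDP condition.

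Third, if the $f_j$ are randomized, I will require their internal randomness to be independent of $\cM$'s coins and of the data; in our use, this is the sampling of actions by the greedy policy and the environment's transitions for the other users. I then condition on this randomness $\xi$. For each fixed $\xi$ the map $g_\xi$ is deterministic, so the previous step applies. Integrating the inequality over $\xi$ preserves both the multiplicative factor $e^{\varepsilon}$ and the additive $\delta$.

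The main obstacle is not the calculation but verifying the structural hypothesis when this lemma is applied to Algorithm~\ref{alg:general}. There, the actions $a_h^{(j)}$ of user $j\neq i$ must be a function only of the published sequence $(f^{(k)})_k$ and of user $j$'s own data and state trajectory. This holds because $\pi_j$ is the greedy policy of $f^{(j)}$, which is the output of the DP sequence of exponential mechanisms. A further subtlety is that, in the adaptive online protocol, later users' data may depend on earlier outputs. I would handle this by viewing users as fixed response functions (their state and reward generation given actions), which makes $\Pi_j\cD$ data-fixed, and by arguing DP of the adaptive composition of mechanisms directly.
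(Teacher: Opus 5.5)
Your proof is correct and is the standard post-processing argument for the billboard lemma: since $\Pi_j\cD=\Pi_j\cD'$ for every $j\neq i$, the other users' outputs are a single fixed function of $\cM(\cD)$ (randomized independently of $\cM$ in your extension), so the $(\varepsilon,\delta)$-DP inequality for $\cM$ carries over to every event. The paper does not reprove this lemma but cites it from Hsu et al. and applies it with the same reasoning, namely that other users' actions depend on user $k$'s data only through the published $f^{(k)}$'s; the paper leaves implicit your added points, modeling users as fixed response functions and using adaptive composition to make the sequence $(f^{(k)})$ DP with respect to the user sequence, and your sketch of them is correct.
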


In each episode $k$, the agent releases a function $f^{(k)}$ selected via the Exponential Mechanism with scoring function $S(f) = \eta f_1(s_1) - L_{\cD}^{\mathsf{BE}}(f)$. Given the sensitivity $\Delta_S \le 8H$ and the application of advanced sequential composition, the sequence of functions $\{f^{(k)}\}_{k=1}^K$ is $(\varepsilon, \delta)$-DP. 
The action $a_h^k$ recommended to the $k$-th user is a function of the DP-protected $f^{(k)}$ and the user's private state $s_h^k$. Since the actions sent to all other users $j \neq k$ depend on $\cD_k$ only through the $(\varepsilon, \delta)$-DP billboard $f^{(k)}$, the mechanism is $(\varepsilon, \delta)$-joint differentially private.

Setting
\begin{align*}
    \eta &= \max \Big\{ \sqrt{\frac{3KH\kappa}{C}},\frac{K^{3/4}\Delta_S^{1/2}(\log(|\cF| K /\alpha))^{1/2} (\log (1/\delta))^{1/4}}{B^{1/4} \varepsilon^{1/2} C^{1/2}}\Big\} \\
    B &= \frac{K^{3/5}\Delta_S^{2/5}(\log(|\cF| K /\alpha))^{2/5} (\log (1/\delta))^{1/5}}{\varepsilon^{2/5} C^{2/5}},
\end{align*}
we have the final bound
\begin{align*}
    \sum_{k=1}^{K} \left( V^\star(s_1) - V^{\pi_{k}}(s_1) \right)
    \le
    2\sqrt{3KH\kappa C} + \frac{K^{3/5} C^{3/5} \Delta_S^{2/5}(\log(|\cF| K /\alpha))^{2/5} (\log (1/\delta))^{1/5}}{\varepsilon^{2/5} },
\end{align*}
where $C = H C_{\mathsf{cov}} \log \left( 1 + \frac{C_{\mathsf{cov}}K}{\kappa} \right)$, $\kappa = \log \frac{\abs{\cF}}{\alpha} + \log \frac{H}{\alpha} $, $\Delta_S \le 8H$.

\end{proof}

\begin{proof}[Proof of Lemma~\ref{lem:general_bound_original}]
    Fix any $x \in \cX$, we divide the $M$ batches into two disjoint sets of ``good'' and ``bad'' batches, i.e., $[M] = \cG_x \cup \cB_x$, and $ \cG_x \cap \cB_x = \emptyset$, where both sets depend on the current $x$. Let $t_m:=(m-1)B+1$ denote the first episode of batch $m$, with the convention $t_{M+1}:=K+1$. To define the ``bad'' batch, we let $P_m(x) := \lambda \mu(x) + \sum_{t \le mB} p_t(x)$ with $P_0(x) := \lambda \mu(x)$. Then we define
    \begin{align*}
        \cB_x = \{m \in [M] : P_m(x) \ge 2P_{m-1} (x) \} 
    \end{align*}

    We now write the summation $\sum_{k=1}^K \mathbb{E}_{x\sim p_k}\, f^{(k)}(x)$ as follows.
    \begin{align} \label{eq:define_bad_batch}
        \sum_{k=1}^K \mathbb{E}_{x \sim p_k} f^{(k)}(x)
        &= \sum_{x \in \cX} \sum_{k=1}^{K} p_k(x)f^{(k)}(x)  \nonumber \\
        &= \underbrace{\sum_{x \in \cX} \sum_{m \in \cB_x} \sum_{k=t_{m}}^{t_{m+1}-1}p_k(x)f^{(k)}(x)}_{\cT_{\mathsf{bad}}} + \underbrace{\sum_{x \in \cX} \sum_{m \in \cG_x} \sum_{k=t_{m}}^{t_{m+1}-1}p_k(x)f^{(k)}(x)}_{\cT_{\mathsf{good}}}.
    \end{align}

    For $\cT_{\mathsf{bad}}$, we first bound the size of $\cB_x$ for any $x$ as follows.
    \begin{equation} \label{eq:b_x}
        \abs{\cB_x} \log 2
        \le \sum_{m \in [M]} \log \left(\frac{P_m(x)}{P_{m-1}(x)}\right) 
        = \log \left( \frac{\lambda \mu(x) + \sum_{t=1}^K p_t(x)}{\lambda\mu(x)} \right)
        \le \log \left( 1+ \frac{C_{\mathsf{cov}}K}{\lambda} \right).
    \end{equation}

    Hence, we have for any $x \in \cX$,
    \begin{align*}
        \abs{\cB_x} \le \log_2 (1 + \frac{C_{\mathsf{cov}}K}{\lambda}).
    \end{align*}

    With this result, we can turn to bound $\cT_{\mathsf{bad}}$ below
    \begin{align*}
        \cT_{\mathsf{bad}} 
        \stackrel{(a)}{\le} \sum_{x \in \cX} \sum_{m \in \cB_x} \sum_{k=t_m}^{t_{m+1}-1} p_k(x) \stackrel{(b)}{\le} \sum_{x \in \cX} \sum_{m \in \cB_x} \sum_{k=t_m}^{t_{m+1}-1} C_{\mathsf{cov}}\mu(x) 
        \stackrel{(c)}{\le} C_{\mathsf{cov}} \cdot B \cdot \log_2 \left(1+ \frac{C_{\mathsf{cov}}K}{\lambda} \right),
    \end{align*}

    where $(a)$ holds by the boundedness of $f^{(k)}$; $(b)$ follows from the definition of $C_{\mathsf{cov}}$; $(c)$ is true by~\eqref{eq:b_x}.

    Now, it remains to bound $\cT_{\mathsf{good}}$. To this end, we first define $\cG_x^k$ to be all the time slots that belong to a good batch. Moreover, for any $x \in \cX$ and $k \in [K]$, define
    \begin{align*}
        \tilde{p}_k(x) &:= \lambda \mu(x) + \sum_{t < [k]} p_t(x) \\
        \hat{p}_k (x) &:= \lambda \mu(x) + \sum_{t \le k} p_t(x)
    \end{align*}

    Thus, we have
    
    \begin{align*}
        \cT_{\mathsf{good}} &= \sum_{x \in \cX} \sum_{k \in \cG_x^k} p_k(x) f^{(k)}(x) \\
        &= \sum_{x \in \cX} \sum_{k \in \cG_x^k} \frac{p_k(x)}{\sqrt{\tilde{p}_k(x)}} \cdot f^{(k)}(x) \sqrt{\tilde{p}_k(x)} \\
        &\stackrel{(a)}{\le} \sum_{x \in \cX} \sum_{k \in \cG_x^k} \frac{\sqrt{2p_k(x)}}{\sqrt{\hat{p}_k(x)}} \cdot f^{(k)}(x) \sqrt{\tilde{p}_k(x)} \\
        &\stackrel{(b)}{\le} \sum_{x \in \cX} \sum_{k \in [K]} \frac{\sqrt{2p_k(x)}}{\sqrt{\hat{p}_k(x)}} \cdot f^{(k)}(x) \sqrt{\tilde{p}_k(x)} \\
        &\stackrel{(c)}{\le} \sqrt{2} \sqrt{\underbrace{\sum_{k \in [K]} \sum_{x \in \cX} \frac{p_k(x)^2}{\hat{p}_k(x)}}_{\cT_1}} \cdot \sqrt{\underbrace{\sum_{k \in [K]} \sum_{x \in \cX} \tilde{p}_k(x) f^{(k)}(x)^2}_{\cT_2}},
    \end{align*}
    where $(a)$ holds by the fact that for any $k$ in a good batch under $x$, $\hat{p}_k(x) \le 2\tilde{p}_k(x)$ as a result of the definition of the good batch~\eqref{eq:define_bad_batch}; $(b)$ holds by adding more positive terms; $(c)$ holds by Cauchy--Schwarz inequality.
    
    For $\cT_2$, by the definition of $\tilde{p}_k(x)$ and boundedness of $f^{(k)}$, we have
    \begin{align*}
    \sum_{k=1}^K \sum_{x \in \cX} \tilde{p}_k(x) f^{(k)}(x)^2 
    &\le \sum_{k=1}^K (\lambda + 1) + \sum_{k=1}^K \sum_{t < [k]} \mathbb{E}_{x \sim p_t} f^{(k)}(x)^2 \\
    &\le 2K\lambda + \sum_{k=1}^K \sum_{t < [k]} \mathbb{E}_{x \sim p_t} f^{(k)}(x)^2.
    \end{align*}

    For $\cT_1$, we note that
    \begin{align*}
        \cT_1 
        = \sum_{x \in \cX} \sum_{k=1}^K \frac{p_k(x)^2}{\hat{p}_k(x)} 
        &\le \sum_{x \in \cX} C_{\mathsf{cov}}\mu(x) \sum_{k=1}^K \frac{p_k(x)}{\hat{p}_k(x)} \\
        &\stackrel{(a)}{\le} 2 \sum_{x \in \cX} C_{\mathsf{cov}}\mu(x) \sum_{k=1}^K \log \left( 1 + \frac{p_k(x)}{\hat{p}_k(x)} \right) \\
        &\le 2 \sum_{x \in \cX} C_{\mathsf{cov}}\mu(x) \sum_{k=1}^K \log \left( 1 + \frac{p_k(x)}{\lambda\mu(x) + \sum_{t < k} p_t(x)} \right) \\
        &= 2 \sum_{x \in \cX} C_{\mathsf{cov}}\mu(x) \log \left( \frac{\lambda\mu(x) + \sum_{t \le K} p_t(x)}{\lambda\mu(x)} \right) \\
        &\le 2 C_{\mathsf{cov}} \log \left( 1 + \frac{C_{\mathsf{cov}}K}{\lambda} \right),
    \end{align*}
    where $(a)$ holds by the fact that $u \le 2\log(1+u)$ for any $u \in [0, 1]$. Thus, with results for $\cT_1$ and $\cT_2$, we have the following bound for $\cT_{\mathsf{good}}$
    \begin{align*}
        \cT_{\mathsf{good}} 
        \le \sqrt{4C_{\mathsf{cov}} \log \left( 1 + \frac{C_{\mathsf{cov}}K}{\lambda} \right) \cdot \left[ 2K\lambda + \sum_{k=1}^K \sum_{t < {[k]}} \mathbb{E}_{x \sim p_t} f^{(k)}(x)^2 \right]}.
    \end{align*}

    Finally, combining the bounds for $\cT_{\mathsf{bad}}$ and $\cT_{\mathsf{good}}$, we have the final bound
    \begin{align*}
        \sum_{k=1}^K \mathbb{E}_{x \sim p_k} f^{(k)}(x) 
        \le C_{\mathsf{cov}} B \log_2 \left( 1 + \frac{C_{\mathsf{cov}}K}{\lambda} \right) + \sqrt{4C_{\mathsf{cov}} \log \left( 1 + \frac{C_{\mathsf{cov}}K}{\lambda} \right) \cdot \left[ 2K\lambda + \sum_{k=1}^K \sum_{t < {[k]}} \mathbb{E}_{x \sim p_t} f^{(k)}(x)^2 \right]}
    \end{align*}
    
\end{proof}

%% file: sections/app-proof-det.tex
\begin{proof}
    For any $f \in \cF$, define
    \begin{align*}
        \bar{f}_1 := \mathbb{E}_{s \sim \rho} \max_a f_1(s,a).
    \end{align*}

    In Algorithm~\ref{alg:deterministic}, the score function is 
    \begin{align*}
        S(f) := \eta \bar{f}_1 - L_{\cD}^{\mathsf{BR}}(f).
    \end{align*}

    Let $[k]$ denote the beginning of the batch containing episode $k$, and let $\cD_{[k]}$ be the dataset available at that time. By the exponential mechanism and a union bound over all batch updates, with probability at least $1-\alpha$, for every episode $k$,
    \begin{align*}
        \eta \bar{f}_1^{(k)} - L_{\cD_{[k]}}^{\mathsf{BR}}(f^{(k)}) \ge \eta \bar{Q}^{\star}_1 - L_{\cD_{[k]}}^{\mathsf{BR}}(Q^\star) - \frac{2\Delta_S}{\varepsilon_0} \log(|\cF|K/\alpha),
    \end{align*}
    where $\bar{Q}^{\star}_1 := \mathbb{E}_{s_1\sim\rho}\max_{a\in\cA}Q^\star_1(s_1,a) = J^\star$.

    Rearranging, we have
    \begin{align*}
        J^{\star} \le \bar{f}^{(k)}_1 - \frac{1}{\eta} \left( L_{\cD_{[k]}}^{\mathsf{BR}}(f^{(k)}) - L_{\cD_{[k]}}^{\mathsf{BR}}(Q^{\star}) \right) + \frac{2 \Delta_S}{\eta \varepsilon_0} \log(|\cF|K/\alpha).
    \end{align*}

    Thus we subtracting $J(\pi_k) = \mathbb{E}_{s_1 \sim \rho} V^{\pi_k} (s_1)$ and obtain
    \begin{align*}
        J^{\star} - J(\pi_k) \le \bar{f}_1^{(k)} - J(\pi_k) - \frac{1}{\eta} \left( L_{\cD_{[k]}}^{\mathsf{BR}}(f^{(k)}) - L_{\cD_{[k]}}^{\mathsf{BR}}(Q^{\star}) \right) + \frac{2 \Delta_S}{\eta \varepsilon_0} \log(|\cF|K/\alpha).
    \end{align*}

    Since $\pi_k$ is greedy with respect to $f^{(k)}$, we have
    \begin{align*}
        \bar{f}_1^{(k)} - J(\pi_k) &= \mathbb{E}_{s_1 \sim \rho, \pi_k} \left[ \sum_{h=1}^H \left( f_h^{(k)}(s_h, a_h) - f_{h+1}^{(k)}(s_{h+1}) - R_h(s_h, a_h) \right) \right] \\
        &= \mathbb{E}_{s_1 \sim \rho, \pi_k} \left[ R^{(f^{(k)})}(\tau) - r(\tau) \right].
    \end{align*}

    Denote $e^{(k)}(\tau) := R^{(f^{(k)})}(\tau) - r(\tau)$, we have
    \begin{align*}
        \bar{f}_1^{(k)} - J(\pi_k) = \mathbb{E}_{s_1 \sim \rho, \pi_k} \left[ e^{(k)}(\tau) \right].
    \end{align*}

    Recall that 
    \begin{align*}
        L_{\cD_{[k]}}^{\mathsf{BR}}(f) := \sum_{t < [k]} \left( R^{(f)}(\tau^{(t)}) - r^{(t)} \right)^2,
    \end{align*}
    where $(\tau^{(t)}, r^{(t)})$ denotes the data collected in episode $t$.

    By the uniform convergence of the square loss(Proposition 19 in~\cite{chen2025outcomebasedonlinereinforcementlearning} )there exists an event of probability at least $1-\alpha$ such that, for all $k$,
    \begin{align} \label{eq:det_uc}
        \frac{1}{2} \sum_{t < [k]} \mathbb{E}_{s_1 \sim \rho, \pi_t} \left[ e^{(k)}(\tau)^2 \right] \le L_{\cD_{[k]}}^{\mathsf{BR}}(f^{(k)}) - L_{\cD_{[k]}}^{\mathsf{BR}}(Q^\star) + \kappa,
    \end{align}
    where $\kappa = O \left( H^3 \log \frac{|\cF| K}{\alpha} \right)$.

    Rearranging terms and applying~\eqref{eq:det_uc}, we obtain
    \begin{align*}
        J^{\star} - J(\pi_k) \le \frac{\kappa}{\eta} + \frac{2\Delta_S}{\varepsilon_0 \eta} \log (|\cF|K / \alpha) + \mathbb{E}_{s_1\sim\rho,\pi_k} \left[ e^{(k)}(\tau) \right] - \frac{1}{2\eta} \sum_{t < [k]} \mathbb{E}_{s_1 \sim \rho, \pi_t} \left[ e^{(k)}(\tau)^2 \right].
    \end{align*}

    Summing over $k=1, \ldots, K$,
    \begin{align*}
        \sum_{k=1}^K \left( J^{\star} - J(\pi_k) \right) &\le \frac{K \kappa}{\eta} + \frac{2K\Delta_S}{\varepsilon_0 \eta} \log (|\cF|K / \alpha) + \sum_{k=1}^K \mathbb{E}_{s_1 \sim \rho, \pi_k} \left[ e^{(k)}(\tau) \right] - \frac{1}{2 \eta} \sum_{k=1}^K \sum_{t < [k]} \mathbb{E}_{s_1 \sim \rho, \pi_t} \left[ e^{(k)}(\tau)^2 \right] \\
        &= \frac{K \kappa}{\eta} + \frac{2K\Delta_S}{\varepsilon_0 \eta} \log (|\cF|K / \alpha) + \mathbb{E}_{s_1 \sim \rho} \left[ \sum_{k=1}^K \mathbb{E}_{\pi_k} \left[ e^{(k)}(\tau) \middle| s_1 \right] - \frac{1}{2 \eta} \sum_{k=1}^K \sum_{t < [k]} \mathbb{E}_{\pi_t} \left[ e^{(k)}(\tau)^2 \middle| s_1 \right] \right].
    \end{align*}

    \begin{lemma} \label{lem:deterministic_bound}
    Let $\lambda \ge 1$. For any initial state $s_1 \in \cS$, consider the sequence $\{f^{(k)},\pi_k\}_{k=1}^K$ generated by Algorithm~\ref{alg:deterministic}, where $\pi_k=\pi_{f^{(k)}}$ is the greedy policy with fixed deterministic tie-breaking. Then
        \begin{align*}
            \sum_{k=1}^{K} \abs{\mathbb{E}_{\pi_k} \left[e^{(k)}(\tau) \middle| s_1 \right]} &\le \frac{BN(s_1)}{2 \log 2} \log \left(1+\frac{KH}{\lambda} \right)  \\
            &\quad + 2\sqrt{2N(s_1)\log \left(1+\frac{KH}{\lambda} \right)} \cdot \sqrt{ 4K\lambda + \sum_{k=1}^{K} \sum_{t<[k]} \left(\mathbb{E}_{\pi_{t}} \left[e^{(k)}(\tau) \middle| s_1 \right] \right)^2},
        \end{align*}
    where $N(s_1)$ is the total number of reachable state-action pairs from $s_1$ under policy class $\Pi_{\cF}$.
    \end{lemma}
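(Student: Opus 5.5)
The plan is to reduce Lemma~\ref{lem:deterministic_bound} to a pair-indexed potential argument of the same shape as the proof of Lemma~\ref{lem:general_bound_original}, using determinism to remove every expectation. Fix $s_1$. Transitions are deterministic and each $\pi_k=\pi_{f^{(k)}}$ is greedy with deterministic tie-breaking, so $\pi_k$ generates a single trajectory $\tau_k=\{(s_h^k,a_h^k)\}_{h=1}^H$ from $s_1$. Hence $\mathbb{E}_{\pi_k}[e^{(k)}(\tau)\mid s_1]=e^{(k)}(\tau_k)$ and $\mathbb{E}_{\pi_t}[e^{(k)}(\tau)\mid s_1]=e^{(k)}(\tau_t)$. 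For each $f^{(k)}$ I will define the per-step residual
\begin{align*}
\delta_h^{(k)}(s,a):=f^{(k)}_h(s,a)-R_h(s,a)-\max_{a'}f^{(k)}_{h+1}(s',a'),
\end{align*}
where $s'$ is the deterministic successor of $(s,a)$. By telescoping, $e^{(k)}(\tau)=\sum_h\delta_h^{(k)}(s_h,a_h)$ for every trajectory $\tau$, not only greedy ones.

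The set $\cX$ will be the $N(s_1)$ state-action pairs reachable from $s_1$ under $\Pi_\cF$. I will define $p_k(x):=\frac1H\mathbf{1}\{x\in\tau_k\}$ and take $\mu$ uniform on $\cX$. Then $p_k(x)/\mu(x)\le N(s_1)/H$, and since every trajectory visits exactly $H$ pairs, $\sum_{k}\sum_x H p_k(x)=KH$. This total is what produces the factor $\log(1+KH/\lambda)$ in the statement. With these choices I will rerun the good/bad batch decomposition from the proof of Lemma~\ref{lem:general_bound_original}, applied to the counts $P_m(x)=\lambda+\#\{t\le mB: x\in\tau_t\}$ (the unnormalized version).
\begin{itemize}
\item Bad batches, where the count at $x$ doubles, number at most $\log_2(1+KH/\lambda)$ per pair $x$. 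Each contributes at most $B$ episodes through $x$, with $|e^{(k)}|$ bounded after normalization. This gives the first term, $\frac{BN(s_1)}{2\log 2}\log(1+KH/\lambda)$.
\item On good batches I will attribute $|e^{(k)}(\tau_k)|$ to the pairs of $\tau_k$ with weight $1/H$ each. I will then use $\hat p_k\le2\tilde p_k$, multiply and divide by $\sqrt{\tilde p_k(x)}$, and apply Cauchy--Schwarz. This yields $\sqrt{\cT_1\cT_2}$ with $\cT_1\le 2N(s_1)\log(1+KH/\lambda)$, by the same $u\le2\log(1+u)$ telescoping as before.
\end{itemize}

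The main obstacle is $\cT_2$. It will have the form $\sum_k\sum_{x\in\tau_k}\big(\lambda+\#\{t<[k]:x\in\tau_t\}\big)\cdot(\text{error attributed to }x)^2$, and it must be bounded by the trajectory-level training error $4K\lambda+\sum_k\sum_{t<[k]}e^{(k)}(\tau_t)^2$. The difficulty is that the training error only sees squared sums of residuals along whole earlier trajectories, not individual $\delta^{(k)}_h(x)^2$. To get around this, I will attribute the error at a pair $x=(s_h^k,a_h^k)$ through the suffix difference rather than through the single residual. For an earlier $\tau_t$ passing through $x$, the prefixes of $\tau_t$ and $\tau_k$ up to $x$ coincide, because transitions are deterministic and the greedy policy from $s_1$ reaches $x$ along the same path when no divergence has happened before it. I will therefore charge $e^{(k)}(\tau_k)$ to the first pair along $\tau_k$ at which the earlier visitation count is smallest. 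For every earlier $t$ that visits that pair, the trajectory $\tau_t$ agrees with $\tau_k$ up to there, and I will try to show that in that case $e^{(k)}(\tau_t)$ controls $e^{(k)}(\tau_k)$, modulo a remainder that is absorbed into the $\lambda$ and $K$ slack terms (this is where the constants $4$ and $2\sqrt2$ would come from). If this comparison fails for trajectories that diverge after the chosen pair, my fallback is to index $\cX$ by trajectory prefixes instead of pairs. Equal prefixes then give exact agreement of the corresponding part of the error, and the count of distinct first-divergence pairs is still at most $N(s_1)$. Combining the bad-batch and good-batch terms then gives the stated bound.
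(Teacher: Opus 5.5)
There is a genuine gap, and it sits at the step you flag as the main obstacle. It cannot be patched inside a pair-count framework. Rerunning Lemma~\ref{lem:general_bound_original} requires a per-pair function $g^{(k)}\ge 0$ with two properties: (i) $|e^{(k)}(\tau_k)|\lesssim\sum_x p_k(x)g^{(k)}(x)$, and (ii) $\sum_{t<[k]}\sum_x p_t(x)g^{(k)}(x)^2\lesssim\sum_{t<[k]}e^{(k)}(\tau_t)^2+O(\lambda)$. In general no such $g^{(k)}$ exists. Here is a counterexample.
\begin{itemize}
\item Take $H=2$ and let both actions at $s_1$ lead to the same state $A$.
\item Set $R_1\equiv 0$, $R_2(A,\cdot)=(0,0.2)$, $f^{(k)}_2(A,\cdot)=(0.5,1)$ and $f^{(k)}_1(s_1,\cdot)=(0.5,0.2)$, so that $\pi_k$ plays $(0,1)$.
\item Suppose the earlier batches played $(0,0)$ and $(1,1)$, $n/2$ times each.
\end{itemize}
Your residuals are then $\delta^{(k)}=(-0.5,-0.8)$ at step $1$ and $(0.5,0.8)$ at step $2$. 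Hence $e^{(k)}$ vanishes on $(0,0)$ and $(1,1)$, but $e^{(k)}(0,1)=0.3$. So the training error is zero and the test error is constant on all $B$ episodes of the batch, even though every pair of $\tau_k$ already has count $n/2$. The pair counts register no novelty, so this batch counts as good. With $O(\lambda)$ slack per episode, (ii) forces $g^{(k)}=O(\sqrt{\lambda/n})$ on both pairs of $\tau_k$, which contradicts (i) once $n\gg\lambda$.

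The same example breaks two other steps.
\begin{itemize}
\item Your claim that trajectories through a common pair share their prefix is false. $(0,1)$ and $(1,1)$ reach $(2,A,1)$ along different prefixes, because states of a deterministic MDP can merge.
\item Even with a common prefix, $e^{(k)}(\tau_t)-e^{(k)}(\tau_k)$ is a difference of suffix residual sums, and nothing controls it.
\end{itemize}
Your fallback of indexing by prefixes does make the attribution exact; in a tree-structured MDP you can simply charge to the terminal pair. However, the number of distinct prefixes is not bounded by $N(s_1)$. With two states per layer and action $a$ leading to state $a$, $N(s_1)=O(H)$, while open-loop (hence Markov, hence possibly greedy) policies realize $2^H$ distinct trajectories.

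The missing idea is to measure novelty by linear span rather than by per-pair counts; $\tau_k$ above is a recombination of old pairs. The paper writes $\mathbb{E}_{\pi_t}[e^{(k)}(\tau)\mid s_1]=\langle\phi^{(t)},\psi^{(k)}\rangle$. Here $\phi^{(t)}\in[0,1]^{N(s_1)}$ is the occupancy vector of $\pi_t$ (an indicator vector here), and $\psi^{(k)}$ is your $\delta^{(k)}$ viewed as a vector. With $V_j=\lambda I+\sum_{t\le j}\phi^{(t)}\phi^{(t)\top}$, the training error is then exactly $\|\psi^{(k)}\|^2_{V_{[k]-1}}-\lambda\|\psi^{(k)}\|_2^2$, so the sum-of-squares versus square-of-sum issue never arises. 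The rest of the argument splits rounds in two.
\begin{itemize}
\item On rounds with $\|\phi^{(k)}\|_{V_{[k]-1}^{-1}}\le 2\|\phi^{(k)}\|_{V_{k-1}^{-1}}$, Cauchy--Schwarz in the $V_{[k]-1}$-norm plus the elliptical potential lemma in dimension $N(s_1)$ give the square-root term. The bound $\|\phi^{(k)}\|_2^2\le H$ is where $KH$ enters.
\item A batch containing a violating round has $\det V$ grow by a factor exceeding $4$ across it. So there are at most $\frac{N(s_1)}{2\log 2}\log\bigl(1+\frac{KH}{N(s_1)\lambda}\bigr)$ such batches, each costing at most $B$ because $|e^{(k)}(\tau_k)|=|f^{(k)}_1(s_1)-V^{\pi_k}(s_1)|\le 1$.
\end{itemize}
In the example, $\phi^{(k)}$ has a unit-norm component orthogonal to the span of the earlier features. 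That is exactly what the potential detects and the pair counts miss.
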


    For a fixed $s_1$, applying Lemma~\ref{lem:deterministic_bound}, Jensen's inequality $\left( \mathbb{E}_{\pi_t} \left[e^{(k)}(\tau) \middle| s_1 \right] \right)^2 \le \mathbb{E}_{\pi_t} \left[ e^{(k)}(\tau)^2 \middle| s_1 \right]$ and Cauchy-Schwarz in the form $xy \le \eta x^2 + \frac{1}{4\eta}y^2$, we have
    \begin{align*}
        \sum_{k=1}^K \mathbb{E}_{\pi_k} \left[ e^{(k)}(\tau) \middle| s_1 \right] \le \frac{BN(s_1)}{2\log 2} \log \left( 1 + \frac{KH}{\lambda} \right) + \eta N(s_1) \log \left( 1 + \frac{KH}{\lambda}  \right) + \frac{K \lambda}{\eta} + \frac{1}{2\eta} \sum_{k=1}^K \sum_{t < [k]} \mathbb{E}_{\pi_t} \left[ e^{(k)}(\tau)^2 \middle| s_1 \right].
    \end{align*}

    Combining the inequalities above and setting $\lambda=\kappa$, we obtain
    \begin{align*}
        \sum_{k=1}^K \left( J^{\star} - J(\pi_k) \right) \le \frac{2K \kappa}{\eta} + \frac{2K\Delta_S}{\varepsilon_0 \eta} \log (|\cF|K / \alpha) + \mathbb{E}_{s_1 \sim \rho} \left[ \frac{BN(s_1)}{2 \log 2} \log \left( 1 + \frac{KH}{\kappa} \right) + \eta N(s_1) \log \left( 1 + \frac{KH}{\kappa} \right) \right].
    \end{align*}

    For each $s \in \cS$ and $h \in [H]$, define
    \begin{align*}
        S_h(s;\Pi_{\cF}) := \left\{ (s',a) : \exists \pi  \in \Pi_{\cF}, \text{ under } \pi \text{ and } s_1 = s, \text{ it holds that } s_h = s', a_h=a \right\}.
    \end{align*}
    and let $N_h(s; \Pi_{\cF}) := \abs{S_h(s; \Pi_{\cF})}$ and $N(s) := \sum_{h=1}^H N_h(s;\Pi_{\cF})$. 

    Under deterministic MDPs, given $s_1$, every greedy policy in $\Pi_\cF$ with fixed deterministic tie-breaking reaches a single element in $S_h(s_1;\Pi_\cF)$ at each layer $h$. Hence the layer-wise coverability of $M_{s_1}^\star$ is characterized by $N_h(s_1;\Pi_\cF)$, and therefore
    \begin{align*}
        N(s_1)
        \le
        H \cdot C_{\mathsf{cov}}(\Pi_\cF;M_{s_1}^{\star}).
    \end{align*}
    
    Taking expectation over $s_1 \sim \rho$, we derive
    \begin{align*}
        \mathbb{E}_{s_1 \sim \rho}[N(s_1)]
        \le
        H \cdot C'_{\mathsf{cov}}.
    \end{align*}
    
    Substituting back, with $C := H C'_{\mathsf{cov}} \log \left( 1 + \frac{KH}{\kappa} \right)$, we have 
    \begin{align} \label{eq:deterministic_regret}
        \sum_{k=1}^K \left( J^{\star} - J(\pi_k) \right) \le \frac{2K \kappa}{\eta} + \frac{2K\Delta_S}{\varepsilon_0 \eta} \log (|\cF|K / \alpha) + \eta \cdot C + B \cdot C.
    \end{align}

    For the non-private term, we balance the standard regret term and the regularization term:
    \begin{align*}
        \frac{2K\kappa}{\eta} = C \eta \quad \rightarrow \quad \eta = \sqrt{\frac{2K \kappa}{C}}.
    \end{align*}

    Substituting this back, the non-private part is 
    \begin{align*}
        \frac{2K \kappa}{\eta} + \eta C = O \left( \sqrt{K \kappa C} \right).
    \end{align*}

    For the private case, we balance
    \begin{align*}
        \frac{K \Delta_S}{\eta \varepsilon_0} \log (|\cF|K / \alpha) = C \eta.
    \end{align*}

    Given by the \emph{Advanced Sequential Composition} (Lemma~\ref{lem:dp-advance}), we consider the worst-case privacy loss: 
    \begin{align*}
        \varepsilon = O \left( \sqrt{\frac{K}{B} \log (1/\delta)}  \cdot \varepsilon_0 \right).
    \end{align*}

    Equivalently,
    \begin{align*}
        \varepsilon_0 = \Omega \left( \frac{\varepsilon \sqrt{B}}{\sqrt{K \log (1/\delta)}} \right).
    \end{align*}

    Substituting back, we obtain
    \begin{align*}
        \eta = O \left( \frac{K^{3/4} \Delta_S^{1/2} \left( \log (|\cF|K/\alpha) \right)^{1/2} \left( \log (1/\delta) \right)^{1/4}}{B^{1/4} \varepsilon^{1/2}C^{1/2}} \right).
    \end{align*}

    Finally, substituting this choice of $\eta$ back to~\eqref{eq:deterministic_regret}, the privacy contribution is
    \begin{align*}
        O \left( \frac{K^{3/4} C^{1/2} \Delta_S^{1/2} \left( \log (|\cF|K/\alpha) \right)^{1/2} \left( \log (1/\delta) \right)^{1/4}}{B^{1/4} \varepsilon^{1/2}} \right).
    \end{align*}
    
    We balance this term with the batching term $B \cdot C$:
    \begin{align*}
        O \left( \frac{ K^{3/4} C^{1/2} \Delta_S^{1/2} \left( \log (|\cF|K/\alpha) \right)^{1/2} \left( \log (1/\delta) \right)^{1/4}}{B^{1/4}\varepsilon^{1/2} } \right) = O(B \cdot C),
    \end{align*}
    which gives 
    \begin{align*}
        B = O \left( \frac{K^{3/5} \Delta_S^{2/5} \left( \log(|\cF|K/\alpha) \right)^{2/5} \left( \log(1/\delta) \right)^{1/5} }{\varepsilon^{2/5} C^{2/5}} \right).
    \end{align*}

    We now turn to bound the sensitivity $\Delta_S$. Note that
    \begin{align*}
        S_{\cD}(f) = \eta \bar{f}_1 - L_{\cD}^{\mathsf{BR}}(f).
    \end{align*}

    Since the first term $\eta \bar{f}_1$ does not depend on $\cD$, the sensitivity is entirely from $L_{\cD}^{\mathsf{BR}}(f)$. Recall
    \begin{align*}
        L_{\cD}^{\mathsf{BR}}(f) = \sum_{(\tau, r) \in \cD} \left( R^{(f)}(\tau) - r \right)^2.
    \end{align*}

    For each $h$, we have $f_h(s_h, a_h) \in [0,1]$ and $f_{h+1}(s_{h+1}) \in [0,1]$, so
    \begin{align*}
        R^{(f)}(\tau) = \sum_{h=1}^H \left[ f_h(s_h, a_h) - f_{h+1}(s_{h+1}) \right] \in [-H, H].
    \end{align*}

    Since $r \in [0,1]$, it holds that
    \begin{align*}
        \abs{R^{(f)}(\tau) - r} \le H+1.
    \end{align*}

    Consequently, changing one trajectory changes $L_{\cD}^{\mathsf{BR}}(f)$ by at most $(H+1)^2$, and we conclude that
    \begin{align*} 
        \Delta_S = O(H^2).
    \end{align*}

    Similarly, by applying the \emph{billboard lemma} (Lemma~\ref{lem:billboard}), Algorithm~\ref{alg:deterministic} satisfies $(\varepsilon, \delta)$-JDP.

    Setting
    \begin{align*}
        \eta = \max \left\{ \sqrt{\frac{2K \kappa}{C}}, \frac{K^{3/4}\Delta_S^{1/2}\left( \log (|\cF|K/\alpha) \right)^{1/2} \left( \log(1/\delta) \right)^{1/4}}{B^{1/4}\varepsilon^{1/2}C^{1/2}} \right\},
    \end{align*}
    and using the above choice of $B$, we have the final bound
    \begin{align*}
        \sum_{k=1}^K \left( J^{\star} - J(\pi_k) \right) \le O \left( \sqrt{K\kappa C} \right) + O \left( \frac{K^{3/5} C^{3/5} \Delta_S^{2/5} \left( \log (|\cF|K/\alpha) \right)^{2/5} \left( \log (1/\delta) \right)^{1/5} }{\varepsilon^{2/5}} \right).
    \end{align*}

    Plugging in
    \begin{align*}
        C = H \cdot C'_{\mathsf{cov}} \log \left( 1 + \frac{KH}{\kappa} \right), \quad \kappa = O\left( H^3 \log(|\cF|K/\alpha) \right), \quad \Delta_S = O(H^2),
    \end{align*}
    we obtain
    \begin{align*}
        \mathsf{Reg} \le 2H^2 \sqrt{2 C'_{\mathsf{cov}} K  \log(|\cF|K/\alpha) \log K} + H^{7/5}  \frac{ (C'_{\mathsf{cov}} K \log K)^{3/5} \left(\log (|\cF|K/\alpha )\right)^{2/5} (\log (1/\delta))^{1/5} }{\varepsilon^{2/5}}
    \end{align*}

\end{proof}

\begin{proof}[Proof of Lemma~\ref{lem:deterministic_bound}]

    Fix an initial state $s_1 \in \cS$. Define the index set of reachable state-action pairs
    \begin{align*}
        \cI:=\{(h,s,a): h\in[H], (s,a)\in S_h(s_1;\Pi_\cF) \} \subseteq[H] \times \cS \times \cA,
        \quad |\cI|=N(s_1).
    \end{align*}
    
    For each episode $k\in[K]$, define $\psi^{(k)}\in \mathbb{R}^{|\cI|}$ by
    \begin{align*}
         \psi^{(k)}_{(h,s,a)} := f_h^{(k)}(s,a) - f_{h+1}^{(k)}(s^+_h(s,a)) - R_h(s,a),
    \end{align*}
    where $s^+_h(s,a)$ is the unique next state reached from $(s,a)$ at step $h$ under the deterministic transition, and $f_{h+1}^{(k)}(s):=\max_{a'\in\cA}f_{h+1}^{(k)}(s,a')$ with the convention $f_{H+1}^{(k)}\equiv 0$.
    
    And define the occupancy vector $\phi^{(k)}\in \mathbb{R}^{|\cI|}$ by 
    \begin{align*}
        \phi^{(k)}_{(h,s,a)}:=\mathbb{P}_{\pi_{k}}(s_h=s, a_h=a | s_1).
    \end{align*}

    Then, by the definition of  $e^{(k)}(\tau)$ and the deterministic transition, the on-policy error in episode $k$ can be written as
    \begin{align*}
        \mathbb{E}_{\pi_{k}} \left[ e^{(k)}(\tau) \middle| s_1 \right]
        =\langle \psi^{(k)}, \phi^{(k)} \rangle.
    \end{align*}
    Thus, our target is to bound $\sum_{k=1}^{K} \left|\langle \psi^{(k)}, \phi^{(k)}\rangle\right|$.
    
    Define the covariance matrices based on historical visitations
    \begin{align*}
        V_{k}:=\lambda I + \sum_{j=1}^{k} \phi^{(j)} \phi^{(j)\top}.
    \end{align*}

    Since the policy is updated only at batch boundaries, the historical covariance matrix available at the beginning of the batch containing episode $k$ is
    \begin{align*}
        V_{[k]-1} = \lambda I + \sum_{t < [k]} \phi^{(t)}\phi^{(t)\top},
    \end{align*}
    where $[k]$ denotes the first episode of the batch containing episode $k$.
    
    Define the bad-round set $\Psi$ where the covariance matrix at the beginning of the batch significantly underestimates the real-time matrix
    \begin{align*}
        \Psi := \left\{ k\in[K]: \frac{ \| \phi^{(k)} \|_{V_{[k]-1}^{-1}}}{\| \phi^{(k)} \|_{V_{k-1}^{-1}}} >2 \right\}.
    \end{align*}
    We decompose the total sum into bad and good rounds
    \begin{align*}
        \sum_{k=1}^{K} \abs{\langle \psi^{(k)},\phi^{(k)}\rangle}
        =\sum_{k\in\Psi} \abs{\langle \psi^{(k)},\phi^{(k)}\rangle}
        +\sum_{k\notin\Psi} \abs{\langle \psi^{(k)},\phi^{(k)}\rangle}.
    \end{align*}
    
    For good rounds ($k \notin \Psi$), by definition $\|\phi^{(k)}\|_{V_{[k]-1}^{-1}}\le 2\|\phi^{(k)}\|_{V_{k-1}^{-1}}$. Hence, by the Cauchy--Schwarz inequality,
    \begin{align*}
        \| \langle \psi^{(k)},\phi^{(k)}\rangle \|  \le \| \psi^{(k)}\|_{V_{[k]-1}}\cdot \|\phi^{(k)}\|_{V_{[k]-1}^{-1}} 
        \le 2 \| \psi^{(k)} \|_{V_{[k]-1}} \cdot \|\phi^{(k)}\|_{V_{k-1}^{-1}}.
    \end{align*}
    
    Summing over good rounds and applying Cauchy--Schwarz again to the sum
    \begin{align*}
        \sum_{k\notin\Psi} \abs{\langle \psi^{(k)},\phi^{(k)}\rangle}
        &\le
        2\sqrt{\sum_{k=1}^{K}\min \left\{1, \|\phi^{(k)}\|_{V_{k-1}^{-1}}^2\right\}}
        \cdot
        \sqrt{\sum_{k=1}^{K} \max \left\{1, \|\psi^{(k)}\|_{V_{[k]-1}}^2\right\}}.
    \end{align*}
    
    By the elliptical potential lemma~\cite{lattimore2020bandit}, and using $\|\phi^{(k)}\|_2^2\le H$, the first term is bounded by
    \begin{align*}
        \sum_{k=1}^{K} \min \left\{1, \|\phi^{(k)}\|_{V_{k-1}^{-1}}^2\right\}
        \le 
        2N(s_1)\log \left(1+\frac{KH}{N(s_1)\lambda}\right).
    \end{align*}
    
    For the second term involving $\psi^{(k)}$, by the definition of $V_{[k]-1}$,
    \begin{align*}
        \|\psi^{(k)}\|_{V_{[k]-1}}^2 = \lambda\|\psi^{(k)}\|_2^2+\sum_{t < [k]}\langle \phi^{(t)},\psi^{(k)}\rangle^2 .
    \end{align*}

    Moreover, for every $t < [k]$,
    \begin{align*}
        \langle \phi^{(t)}, \psi^{(k)}\rangle = \mathbb{E}_{\pi_t}\left[e^{(k)}(\tau) \middle| s_1\right].
    \end{align*}

    By the definition of $\psi^{(k)}$, we have $\| \psi^{(k)} \|_{\infty} \le 2$ and obtain
    \begin{align*}
        \sum_{k=1}^{K}\max \left\{1,\|\psi^{(k)}\|_{V_{[k]-1}}^2\right\}
        \le 4K\lambda N(s_1) + \sum_{k=1}^{K}\sum_{t < [k]}\left(\mathbb{E}_{\pi_{t}} \left[e^{(k)}(\tau) \middle| s_1\right]\right)^2.
    \end{align*}

    Now we bound the bad rounds. For those bad rounds ($k\in\Psi$), since $\pi_k$ is greedy with respect to $f^{(k)}$, we have
    \begin{align*}
        \langle \psi^{(k)},\phi^{(k)}\rangle = \mathbb{E}_{\pi_k}[e^{(k)}(\tau) | s_1] = f_1^{(k)}(s_1)-V^{\pi_k}(s_1).
    \end{align*}

    By the normalization of rewards and the range of functions in $\cF$, both terms lie in $[0,1]$. Hence
    \begin{align*}
        \sum_{k\in\Psi} \abs{\langle \psi^{(k)},\phi^{(k)}\rangle}
        \le |\Psi|.
    \end{align*}

    We bound $|\Psi|$ using a determinant argument. Let $t_m := (m-1)B + 1$ denote the first episode of batch $m$, with the convention $t_{M+1} := K+1$. For any $k \in \Psi$, suppose $k$ belongs to batch $m$, so that $t_m = [k]$. By the definition of $\Psi$,
    \begin{align*}
        \frac{\|\phi^{(k)}\|_{V_{t_m - 1}^{-1}}}{\|\phi^{(k)}\|_{V_{k-1}^{-1}}} > 2.
    \end{align*}
    Since $V_{t_m -1}\preceq V_{k-1}$, the standard determinant comparison inequality implies
    \begin{align*}
        \log\left( \frac{\det(V_{k-1})}{\det(V_{t_m -1})} \right) > 2\log2.
    \end{align*}
    Moreover, since $V_{k-1}\preceq V_{t_{m+1} - 1}$, we have
    \begin{align*}
        \log\left( \frac{\det(V_{t_{m+1}-1})}{\det(V_{t_m-1})} \right)
        > 2\log2.
    \end{align*}
    
    Let $\cM_\Psi$ be the set of batch-start indices $m$ whose batch contains at least one bad round. Summing the above inequality over these bad batches and bounding it by the total log-determinant growth yields
    \begin{align*}
        |\cM_\Psi| \cdot (2\log 2)
        \le
        \sum_{m \in \cM_\Psi}
        \log\frac{\det(V_{t_{m+1}-1})}{\det(V_{t_m-1})}
        \le
        \log\frac{\det(V_K)}{\det(V_0)}.
    \end{align*}
    
    Applying the standard determinant bound
    \begin{align*}
        \log \frac{\det V_K}{ \det V_0} \le N(s_1) \log \left( 1 + \frac{KH}{ N(s_1)\lambda} \right),
    \end{align*}
    we can solve for the number of bad batches
    \begin{align*}
        |\cM_\Psi| \le \frac{N(s_1)}{2\log 2}\log \left(1+\frac{KH}{N(s_1)\lambda} \right).
    \end{align*}
    
    Since each batch has size $B$, the total number of bad rounds $|\Psi|$ is at most $B \cdot |\cM_\Psi|$. Therefore
    \begin{align*}
        |\Psi| \le \frac{B N(s_1)}{2\log 2}\log \left(1+\frac{KH}{N(s_1)\lambda}\right).
    \end{align*}
    
    Combining the bounds for good and bad rounds and rescaling $\lambda \leftarrow \frac{\lambda}{N(s_1)}$, we obtain
    \begin{align*}
        \sum_{k=1}^{K} \left|\mathbb{E}_{\pi_{k}} \left[e^{(k)}(\tau) \middle| s_1 \right]\right|
        &\le \frac{BN(s_1)}{2\log 2} \log \left(1+\frac{KH}{\lambda} \right) \\
        &\quad + 2\sqrt{2N(s_1)\log \left(1+\frac{KH}{\lambda} \right)} \cdot \sqrt{4K\lambda + \sum_{k=1}^{K}\sum_{t < [k]}\left(\mathbb{E}_{\pi_{t}}[e^{(k)}(\tau) \middle| s_1]\right)^2 }.
    \end{align*}
    This completes the proof.
\end{proof}